\newtheorem{theorem}{Theorem}
\theoremstyle{definition}
\newtheorem{definition}{Definition}
\newtheorem{remark}{Remark}
\DeclareMathOperator*{\argmax}{argmax}
\DeclareMathOperator*{\argmin}{argmin}
 \let\MYoriglatexcaption\caption
    \renewcommand{\caption}[2][\relax]{\MYoriglatexcaption[#2]{#2}}
\begin{document}
%
% paper title
% Titles are generally capitalized except for words such as a, an, and, as,
% at, but, by, for, in, nor, of, on, or, the, to and up, which are usually
% not capitalized unless they are the first or last word of the title.
% Linebreaks \\ can be used within to get better formatting as desired.
% Do not put math or special symbols in the title.
% \title{Multi-Pursuit Evasion for Safe Targeted Navigation of Drones with Asynchronous Multi-Stage Deep Reinforcement Learning}
\title{Learning Multi-Pursuit Evasion for Safe Targeted Navigation of Drones}
%
%
% author names and IEEE memberships
% note positions of commas and nonbreaking spaces ( ~ ) LaTeX will not break
% a structure at a ~ so this keeps an author's name from being broken across
% two lines.
% use \thanks{} to gain access to the first footnote area
% a separate \thanks must be used for each paragraph as LaTeX2e's \thanks
% was not built to handle multiple paragraphs
%
    
\author{Jiaping Xiao,~\IEEEmembership{Graduate Student Member,~IEEE}~and~Mir Feroskhan,~\IEEEmembership{Member,~IEEE}% <-this % stops a space
\thanks{This work was supported by A*STAR under its 2021 MTC Individual Research Grants (IRG) and Young Individual Research Grants (YIRG) Call (Award MTCYIRG21-0004i). \textit{(Corresponding author: Mir Feroskhan.)}}

\thanks{J. Xiao and M. Feroskhan are with the School of Mechanical and Aerospace Engineering, Nanyang Technological University, Singapore 639798, Singapore (e-mail: jiaping001@e.ntu.edu.sg; mir.feroskhan@ntu.edu.sg).
}}% <-this % stops a space

\maketitle

% As a general rule, do not put math, special symbols or citations
% in the abstract or keywords.
\begin{abstract}
Safe navigation of drones in the presence of adversarial physical attacks from multiple pursuers is a challenging task. This paper proposes a novel approach, asynchronous multi-stage deep reinforcement learning (AMS-DRL), to train adversarial neural networks that can learn from the actions of multiple evolved pursuers and adapt quickly to their behavior, enabling the drone to avoid attacks and reach its target. Specifically, AMS-DRL evolves adversarial agents in a pursuit-evasion game where the pursuers and the evader are asynchronously trained in a bipartite graph way during multiple stages. Our approach guarantees convergence by ensuring Nash equilibrium among agents from the game-theory analysis. We evaluate our method in extensive simulations and show that it outperforms baselines with higher navigation success rates. We also analyze how parameters such as the relative maximum speed affect navigation performance. Furthermore, we have conducted physical experiments and validated the effectiveness of the trained policies in real-time flights. A success rate heatmap is introduced to elucidate how spatial geometry influences navigation outcomes. Project website: \url{https://github.com/NTU-ICG/AMS-DRL-for-Pursuit-Evasion}.

\end{abstract}

\begin{IEEEImpStatement}
Safe autonomous navigation is fundamentally required for intelligent drones, especially in congested air traffic scenarios. Obstacle avoidance approaches such as traditional path planning and emerging reinforcement learning-based controllers have been widely studied and deployed. However, most of the existing methods assume static motion or unchanged behavior of obstacles, falling short of addressing the intelligent attack from intentional agents like birds. Inspired by the pursuit-evasion behavior in nature, the reinforcement learning-based solution proposed in this paper provides drones with rapid response and evolving capabilities to evade intelligent pursuers. With the asynchronous multi-stage training framework, this solution is supposed to bring insights into navigation methods on many other platforms, including self-driving cars, ground mobile robots, and underwater vehicles, when facing intelligent attacks.

\end{IEEEImpStatement}

% Note that keywords are not normally used for peerreview papers.
\begin{IEEEkeywords}
Deep reinforcement learning, multi-agent systems, pursuit-evasion game, safe targeted navigation.
\end{IEEEkeywords}

% For peer review papers, you can put extra information on the cover
% page as needed:
\ifCLASSOPTIONpeerreview
\begin{center} \bfseries EDICS Category: 3-BBND
\end{center}
\fi
%
% For peerreview papers, this IEEEtran command inserts a page break and
% creates the second title. It will be ignored for other modes.
\IEEEpeerreviewmaketitle

\section{INTRODUCTION}
\IEEEPARstart{A}{s} the commercial drone industry is poised to take off in the near future, drones will play a key role in various applications such as parcel delivery, photography, precision agriculture, and power grid inspection, making our airspace more congested \cite{doole2020estimation}. At low altitudes, drones have to contend with obstacles as well as wildlife. In fact, bird attacks due to territorial behavior during nesting seasons are a well-documented threat to drones, with such a phenomenon curtailing Google's plans for last-mile drone delivery services \cite{Lagan2021}. Drones in surveillance operations might also be subjected to counter UAV measures \cite{kratky2018countering, 9764628} that include neutralization and capture. Current anti-drone approaches, such as nets mounted on another drone or a drone swarm, hand-held net cannons, and trained birds of prey, can be easily deployed at any feasible location. Even the killer drone \cite{park2021survey} is adopted to track and damage target drones with exceptional agility. Hence, drones equipped with the capability to maneuver and evade pursuers such as birds and malicious drones are required to improve their flight reliability and safety in such adversarial environments. This Pursuit-Evasion with Targeted Navigation (PETN) mission requires efficient perception and an online rapid-response control policy to continuously detect the positions of the pursuers and avoid their long-term chasing and even attacks while trying to accomplish their navigation task in the meantime. These requirements make evading pursuers for drones much more challenging compared to existing obstacle avoidance problems, as the pursuers have their own respective decision-making entities and can also perform continuous maneuvers.

\begin{figure}[!tbp]
      \centering
%       \framebox{\parbox{3in}{
% }}
      \includegraphics[width=3.4in]{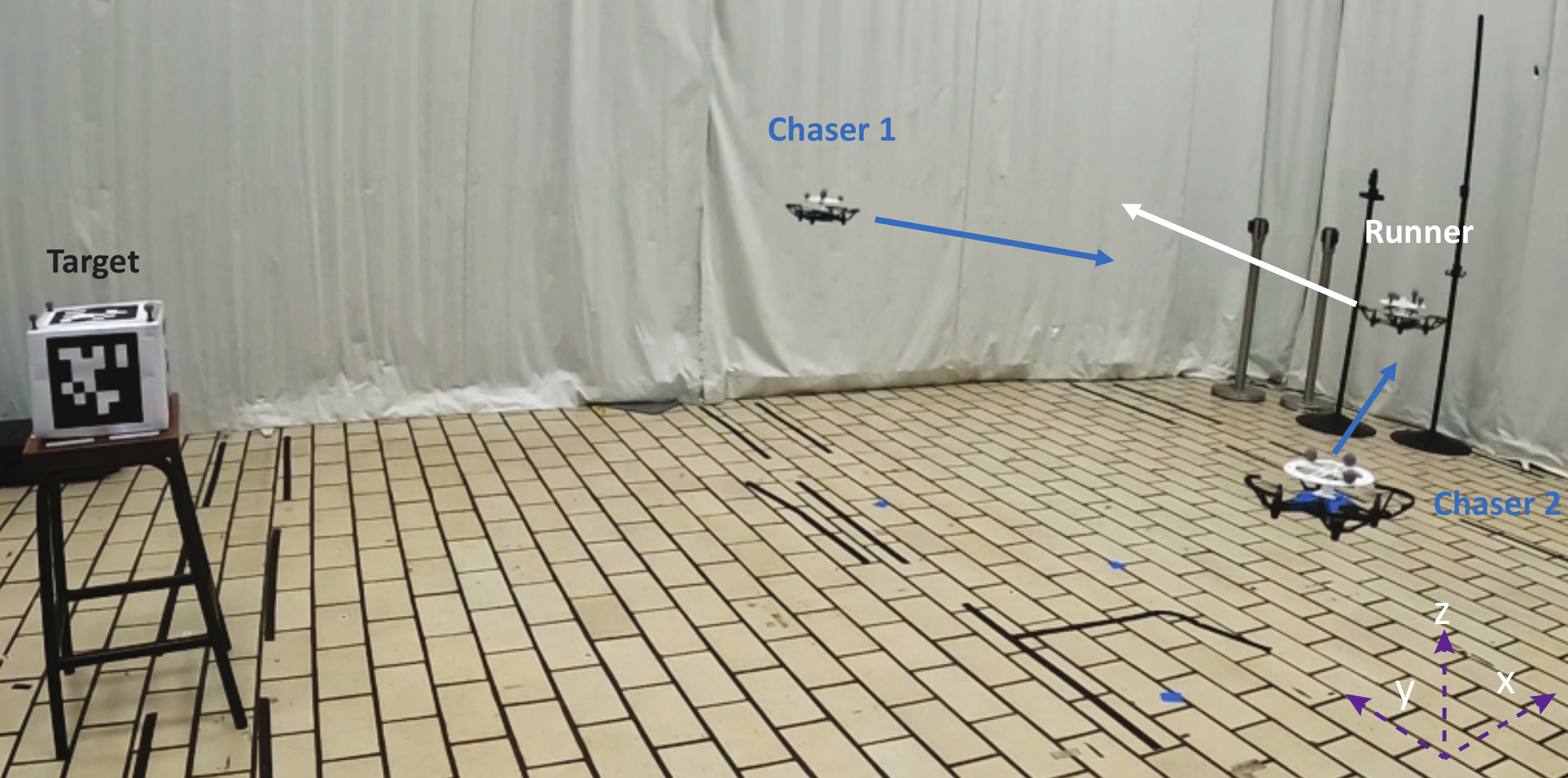}
      \caption{Evading multiple pursuers with learned policy. The runner (evader) is labeled as white, and the two chasers (pursuers) are labeled as blue. The target is a box with AprilTag.}
      \label{example}
\end{figure}

Nowadays, autonomous obstacle avoidance has become an indispensable component of drones' safe navigation. Massive obstacle avoidance methods have been proposed for drones' safe operation in complex environments \cite{zhou2019robust, Kaufmann2019, Falanga2020, song2021autonomous, xiao2021flying, Loquercio2021}. {However, most of them considered static cluttered obstacles, such as trees \cite{zhou2020ego} or buildings \cite{Loquercio2021}, whose positions would not change throughout the flight period. Even though some works proposed feasible approaches to avoid simple or ad-hoc dynamic obstacles (such as a moving gate \cite{Kaufmann2019}, balls being thrown \cite{Falanga2020, 9881875}, and moving pedestrians \cite{chen2022real}), the paths of the obstacles can be easily detected and predicted while in flight.} As such, these works only deal with unintelligent and unperceptive obstacles. The drone only needs to perform the required avoidance maneuvers at the predicted time. Furthermore, the self-learning capabilities of intelligent pursuers like birds can push drones to constantly approach their limits of computation and agility, which are intractable to solve with conventional obstacle avoidance approaches. {To effectively handle aforementioned threats, drones require higher agility and powerful learning capabilities.} Recently, the pursuit-evasion problem for drones has been investigated in the field of intelligent robotics \cite{9093032, vlahov2018developing, wang2020cooperative, 9387125, zhang2022game} which provides valuable insights to improve the maneuverability of drones. However, these works only focused on designing pursuit strategies without considering the safe navigation requirement from the evaders' side. Besides, they assumed that all dynamics of pursuers and evaders were pre-obtained, lacking consideration of evolving capabilities for both sides. Once the agents' behavior changes, the problem needs to be solved from scratch. To achieve PETN, a control policy must be designed to have sufficient agility and long-term evolution ability \cite{stanley2019designing} when performing navigation tasks, which existing traditional methods cannot provide.

% \subsection{Contributions}
Deep reinforcement learning (DRL) has recently attracted much attention for drone applications, such as obstacle avoidance \cite{xiao2021flying}, disturbance rejection \cite{o2022neural}, autonomous navigation \cite{10195084,zhao2023learning}, task coordination \cite{xiao2022collaborative, wu2021reinforcement}, drone racing \cite{kaufmann2023champion}, etc. DRL has shown outstanding performance over traditional numerical methods such as polynomial-based trajectory generation\cite{mellinger2011minimum} regarding the agility of drones\cite{song2021autonomous}. However, these DRL methods cannot be extended to solve the pursuit-evasion problem, as their policies are trained without the evolution of environments and adversarial agents. To address the PETN problem, especially multi-pursuit evasion with targeted navigation problem (MPETN) more efficiently, this paper proposes a novel \textbf{A}synchronous \textbf{M}ulti-\textbf{S}tage \textbf{D}eep \textbf{R}einforcement \textbf{L}earning (\textbf{AMS-DRL}) approach that can provide drones with rapid response and adaptive capabilities to evade pursuers.  

Motivated by nature-inspired pursuit-evasion phenomena where pursuers and chasers keep learning from each other with steady policy evolution, the proposed AMS-DRL is designed to evolve adversarial agents from multi-stage deep reinforcement learning, where the chaser drones (pursuers) and the runner drone (evader) are asynchronously trained in a bipartite graph way during different stages. With perfect information on position observation, the control policy for the runner drone is initially trained to reach the target box (the destination of the runner drone) in the cold-start learning stage ($S_0$). During the following stages, the chaser drones and the runner drone are trained against each other with one team fixed with a pre-trained policy from the previous stage; e.g., during Stage 1 ($S_1$), the chaser drones are trained to attack the runner drone while the runner drone is driven with the control policy from $S_0$. AMS-DRL can be seen as an unsupervised learning algorithm toward open-ended evolution \cite{stanley2019open}, which approximates the capability limits of agents consistently only if the adversarial agents reach a Nash equilibrium (NE) \cite{holt2004nash}. We demonstrate that the resulting policies can provide drones with evasive maneuvers for self-preservation, opening the possibility of using DRL for solving multi-pursuit evasion problems. Our main contributions are summarized as follows: 

(1) {We formulate the MPETN problem into an adversarial mixed game and present a DRL-based approach involving multiple drones.} A novel learning-based approach named AMS-DRL is proposed to provide drones with sufficient maneuverability and long-term evolution ability to evade multiple pursuers during navigation.

(2) The proposed AMS-DRL is proven to converge to optimal policies based on game theory. Through extensive simulations, we demonstrate the superiority of our navigation policy over baseline methods such as PPO and the artificial potential fields (APF) method against collaborative pursuers.

(3) Sim2Real and physical experiments under different spatial geometry configurations are conducted to verify the effectiveness of the trained navigation policy in real flights.

The rest of this paper is organized as follows. Section II summarizes existing methods related to our work. Section III provides several preliminary works, including the quadrotor drone dynamics and RL framework. Section IV presents the problem formulation and the RL description of MPETN. Section V describes the AMS-DRL approach in detail. Experiments and results are presented and discussed in Section VI, and we conclude our work in the final section.

\section{Related Works} \label{section 2}
\subsection{Obstacle Avoidance}
Obstacle avoidance aims to achieve designed objectives (e.g., reaching the destination) while subject to non-intersection or non-collision constraints. Obstacle avoidance is a basic requirement for intelligent drones to fly through cluttered environments. From the perspective of how drones generate control commands, existing obstacle avoidance can be categorized into indirect methods and end-to-end methods. Indirect methods \cite{Falanga2020,zhou2019robust, zhou2020ego,zhou2021raptor,Quan2021} generally divide the obstacle avoidance problem into several subtasks, namely \textbf{perception}, \textbf{localizing} and \textbf{planning}. The states of obstacles, such as shape, position, and velocity, are detected and localized before a maneuver is performed. Once obstacles are located, the drone can generate a feasible path or take action to avoid the obstacles. Within the indirect methods, planning can be further divided into two categories: one is offline methods based on high-resolution maps and pre-known position information, such as Dijkstra's algorithm \cite{Dijkstra1959ANO}, A$^\star$ \cite{4082128} and sequential convex optimization \cite{augugliaro2012generation}; the other is online methods based on real-time decision-making. Online methods can be further categorized into online path planning \cite{zhou2019robust, zhou2020ego, zhou2021raptor} and artificial potential field (APF) methods \cite{chen2016uav, Falanga2020}. Currently, due to the advantages of optimization and prediction capabilities, online path planning methods have become the preferred choice for drone obstacle avoidance. For instance, Zhou Boyu \textit{et al.}\cite{zhou2019robust} introduced Fast-Planner for a drone to perform high-speed flight in an unknown cluttered environment. The key contributions are a robust and efficient planning scheme incorporating path searching, B-spline optimization, and time adjustment to generate feasible and safe trajectories for drones' obstacle avoidance. However, these online path planning methods necessitate onboard complex optimization computation to obtain a local or global optimal trajectory. In contrast to online path planning, the APF methods require fewer computation resources and can well cope with dynamic obstacle avoidance using limited sensor information. Falanga \textit{et al.} \cite{Falanga2020} developed an efficient and fast control strategy based on APF to avoid fast-approaching dynamic obstacles. However, the repulsive forces computed only reach substantial values when the obstacle is very close, which may lead to unstable and jagged behavior. Besides, APF methods are heuristic methods that cannot guarantee global optimization and robustness for drones. Hence, it is not ideal to adopt APF methods for drones to evade pursuers.

Compared to the indirect methods, end-to-end methods \cite{anwar2018navren, dai2020automatic,xiao2021flying, Loquercio2021} exploit reinforcement learning (RL) to map perception to actions directly. Reinforcement learning \cite{Andrew1998} is a technique for mapping the state (observation) space to the action space in order to maximize a long-term return with given rewards. A typical RL model features agents, the environment, reward functions, action, and state space. The policy model achieves convergent status via constant interactions between the agents and the environment, where the reward function guides the training process. Malik \textit{et al.} \cite{anwar2018navren} presented an end-to-end reinforcement learning approach called NAVREN-RL to navigate a quadrotor drone in an indoor environment with expert data and knowledge-based data aggregation. The reward function in \cite{anwar2018navren} was formulated from a ground truth depth image and a generated depth image. Loquercio \textit{et al.} \cite{Loquercio2021} developed an end-to-end approach that can autonomously guide a quadrotor drone through complex wild and human-made environments at high speeds with purely onboard visual perception and computation. The neural network policy was trained in a high-fidelity simulation environment with massive expert knowledge data. While the aforementioned end-to-end methods provide a straightforward way to generate obstacle avoidance policies for drones, they require massive training data to obtain acceptable generalization capabilities. Meanwhile, without the evolutionary environment, it is challenging for these designed neural network policies to update their weights to evade intelligent pursuers. 

\subsection{Pursuit-Evasion Game}
Pursuit-evasion games (PEGs) have been widely studied in the fields of game theory and control theory. PEGs involve capturing mobile agents (evaders) with one or more pursuers. Back to the 1950s, Isaacs R. provided a comprehensive introduction to the mathematical theory of pursuit-evasion games\cite{isaacs1999differential}, which is considered a classic reference in this field. Fisac Jaime F. \textit{et al.} \cite{fisac2015pursuit} investigated the problem of the pursuit-evasion-defense differential game with dynamic obstacles in constrained environments. Xu Fang \textit{et al.} \cite{9093032} studied the multi-pursuer single-evader PEG with an evader moving faster than the pursuers in a constraint-free environment, and proposed a distributed pursuit algorithm to enable pursuers to form an encirclement and approach the faster evader, which was applied and verified in mobile ground robots. These works provide traditional game/control-theoretic approaches for solving PEG problems in dynamic, constrained, or obstacle-free environments. Most of the existing traditional approaches require simplified dynamics or environment assumptions and are challenging to extend to general pursuit-evasion scenarios. 

Recently, RL has been adopted to ease strict assumptions for pursuit-evasion problems \cite{vlahov2018developing, wang2020cooperative, 9387125, zhang2022game, 9716772, kokolakis2022safety}. A distributed cooperative pursuit strategy is proposed in \cite{wang2020cooperative} to address the multiple pursuers-one superior evader problem with RL and efficient communication topology networks. A curriculum learning approach in \cite{9387125} is developed to train a decentralized multi-agent pursuit policy to capture mobile targets. In \cite{zhang2022game}, a target prediction network (TP Net) is integrated into the multi-agent deep deterministic policy gradient (MADDPG) to address the target PEG problem of drones in the obstacle-filled environment, where high-quality 3D simulation scenarios are developed. The attention-enhanced neural network was introduced in \cite{9716772} to address predator-prey games. However, most of these RL-based approaches assume fixed or simple evader behaviors and rarely consider the NE of the PEGs. Moreover, they focus only on the pursuit side and ignore the safety of evaders. In this paper, we formulate the safe targeted navigation problem for the evader drone and address it by finding the NE with AMS-DRL.

\subsection{Reinforcement Learning with Self-play}
With self-play \cite{heinrich2015fictitious,heinrich2016deep,silver2018general}, RL has achieved remarkable success in adversarial environments (or competitive games), such as DeepMind's AlphaZero \cite{silver2018general}, AlphaStar \cite{vinyals2019grandmaster} and OpenAI Five \cite{berner2019dota}. Self-play uses the agent’s current or past ``selves" as opponents to train agents to automatically generate curricula, which can be a powerful algorithm for learning behaviors within high-dimensional continuous environments. Silver \textit{et al.} \cite{silver2018general} demonstrated the superhuman performance of AlphaZero, a general RL algorithm with self-play, in many challenging perfect-information games (such as chess, shogi, and Go) without domain knowledge. To address large-scale imperfect-information games with RL, Heinrich \textit{et al.} \cite{heinrich2016deep} introduced Neural Fictitious Self-Play (NFSP) via a combination of Fictitious Self-Play (FSP) \cite{heinrich2015fictitious} and neural networks, and demonstrated it can approach NE without prior domain knowledge over real-world scale games. Following that, Vinyals \textit{et al.} \cite{vinyals2019grandmaster} provided the first Grandmaster level player-AlphaStar in the real-time strategy game StarCraft II with complex, imperfect information and long-time horizons. AlphaStar features a Prioritized Fictitious Self-Play (PFSP) algorithm, which adopts a match-making mechanism to provide a better learning signal instead of uniformly playing against all previous ``selves" to improve the training efficiency. In 2019, OpenAI Five \cite{berner2019dota} first defeated the world champions in the Dota 2 game with proposed self-play experience buffers and surgery techniques, which achieved large-scale distributed learning and long-time training. These methods paved the way for training agents in complicated adversarial environments and provided them with superhuman performance against opponents. Furthermore, OpenAI found that multi-agent self-play can induce distinctive emergent strategies toward open-ended growth in complexity \cite{baker2019emergent}, which helped solve evolution-required decision-making problems. However, all the aforementioned methods only process zero-sum two-team games where the rewards of two adversarial teams can be flipped, i.e., the sum of rewards for two teams remains zero during training. In fact, it is determined by the inherent property of self-play that exploits ``selves" as opponents during training, which means two adversarial teams' observations and rewards must be symmetric. 

Nevertheless, pursuit-evasion for drones is not a zero-sum game since the runner drone is required to reach the target while the chaser drones aim to pursue the runner drone. The success of reaching the target will not bring a loss to the chasing team. Meanwhile, the observation spaces for the runner agent and chaser agent are not symmetric since the chaser drones cannot perceive the location of the target. In our work, we extend the capabilities of DRL to address the pursuit-evasion problem for drones via AMS-DRL. AMS-DRL provides a cold-start learning stage for the runner drone to reach the target and asynchronous training for two adversarial teams at multiple stages/phases. In each phase, agents can compete with their opponents driven by the policy from the prior stage/phase. The limitation of self-play is addressed via the asynchronous training scheme toward an approximate NE.

% \subsection{Subsection Heading Here}
% Subsection text here.

% % needed in second column of first page if using \IEEEpubid
% %\IEEEpubidadjcol

% \subsubsection{Subsubsection Heading Here}
% Subsubsection text here.

\section{Preliminaries}
\subsection{Quadrotor Drone Dynamics}
We first present the quadrotor drone dynamics for agent modeling. Regardless of the wind disturbance and the aerodynamic drag, the 6-degree-of-freedom (6DOF) dynamics of a quadrotor drone can be formulated as:
\begin{subequations}
\setlength{\arraycolsep}{0.0em}
\begin{eqnarray}
   \bm{\dot {p}}&{}={}&\bm{v}\\
   \bm{\dot {v}}&{}={}& \bm{R}(\bm{q}) \odot \bm{f} + \bm{g}\\
   {\bm{\dot q}}&{}={}& \frac{1}{2}\bm{\Omega}(\bm{\omega}_B) \cdot \bm{q} \\
   \dot {\bm{\omega}_B} &{}={}& \bm{J}^{-1}(\bm{\tau} - \bm{\omega}_B \times \bm{J} \bm{\omega}_B)
\end{eqnarray}
\setlength{\arraycolsep}{5pt}
\end{subequations}
with state $\bm{x} = [\boldsymbol{p}^T, \boldsymbol{v}^T, \boldsymbol{q}^T]^T$ and input $\boldsymbol{u} = [f, \omega_x,\omega_y,\omega_z]^T$, where ${\boldsymbol{p} = [x,y,z]^T}$ and ${\boldsymbol{v} = [v_x, v_y, v_z]^T}$ are the position and velocity of the drone. The $z$ axis is opposite to the direction of Earth gravity $\boldsymbol{g}$. $\boldsymbol{q} = [q{_0},q_1,q_2,q_3]^T$ is the unit quaternion which is used to describe the quadrotor's attitude, while ${\boldsymbol{\omega}}_B = [\omega_x,\omega_y,\omega_z]^T$ denotes the body rate (roll, pitch, and yaw, respectively). $\boldsymbol{f}=[0,0,f]^T$ is the mass-normalized thrust vector with $f = \sum_{i=1}^{4} f_i/m$. Here, $f_i$ are the four individual motor thrusts. $\bm{R(q)}$ defines the rotation matrix from the body frame $O_B$ to the world inertial frame $O_W$, and it is a function of $\boldsymbol{q}$. $\odot$ and $\times$ denote the inner production and cross production, respectively. $\boldsymbol{g} = [0,0,-g_z]^T$ with $g_z = 9.81m/s^2$ is the gravity acceleration on Earth. $\bm{J}$ is the inertia matrix of the drone and $\bm{\tau}$ is the generated torque. The skew-symmetric matrix $\boldsymbol{\Omega}(\boldsymbol{\omega}_B)$ is given by
\begin{equation} \label{skew-sym}
\bm{\Omega}(\bm{\omega}_B) = \left[ {\begin{array}{*{20}{c}}
0&{ - {\omega _x}}&{ - {\omega _y}}&{ - {\omega _z}}\\
{{\omega _x}}&0&{{\omega _z}}&{ - {\omega _y}}\\
{{\omega _y}}&{ - {\omega _z}}&0&{{\omega _x}}\\
{{\omega _z}}&{{\omega _y}}&{ - {\omega _x}}&0
\end{array}} \right]
\end{equation}

The body's momentum will not be examined in this model since our method focuses on the high-level control policy and the deployed platform can precisely track the angular rate commands, allowing angular dynamics to be ignored. The inputs are constrained within limited ranges, i.e.,
\begin{equation} \label{ctl_contraint}
    f_{min} \leq f \leq f_{max} ~~~~ \omega_{min} \leq ||\bm{\omega}_B|| \leq \omega_{max}  
\end{equation}

\subsection{Reinforcement Learning}
Reinforcement Learning excels in training models to make decisions in complex, unpredictable, and ever-changing environments. A typical RL model features agents, the environment, reward functions, action, and state spaces. The environment and the agents always interact during training and execution. An action is carried out by the agent once it obtains a state from the environment. The environment acknowledges this action and rewards the agent based on the outcome of the state change. This ongoing interaction ends when the agent reaches the desired state, the maximum number of time steps has been completed, or a predefined termination condition is triggered. The agent's objective throughout this process is to maximize the expected cumulative reward.

\subsubsection{Markov Decision Process (MDP)}
The RL problem can be modeled with a Markov Decision Process (MDP) \cite{sutton2018reinforcement} which is defined by a tuple $M:\langle \sset, \mathcal{O}, \aset, \Tfun, \rset, \D \rangle$. $\sset$ denotes a set of possible states of agents. At time step $t$ and current state $s_t \in \sset$, the agent obtains an observation $o_t \in \mathcal{O} \subseteq \sset$ and selects an action $a_t \in \aset$. $\Tfun(s_{t+1}\mid s_t,{a}_t): {\sset \times \aset \times \sset \mapsto [0,1]}$ is the transition probability of the environment moving from the current state $s_t$ to the next state $s_{t+1}$ after taking action ${a}_t$. $\rset: \sset \times \aset \mapsto \mathbb{R}$ refers to the reward function evaluating the bounded instantaneous reward $r_t \in [\rmin, \rmax]$. Each agent will receive an instantaneous reward $r_t := \rset(s_t, a_t)$ after taking action $a_t$ at $s_t$.

The agent starts from the state $s_0$ sampled from an initial state distribution $s_0 \sim \issetdef$. At each time step $t$, the agent is guided by a stochastic policy $\p(a_t \mid o_t) : \mathcal{O} \times \aset \mapsto [0,1]$. Given a policy $\pi$ and a state $s_t$, the state value function $V: \sset \mapsto \rset$ can be defined as:
\begin{equation}
    V^{\pi}(s_t) := \mathop{\mathbb{E}_{\pi}\left[ \sum_{k = 0}^{k = T-t}{\D}^{k}{r_{t+k}\mid{s_t}}\right]}
\end{equation}
where $\Ddef$ is the discount factor and $T$ is the final step of an episode. Similarly, the state-action value function $Q: \sset \times \aset \mapsto \rset$ can be calculated as a Bellman function:
\begin{subequations}
\setlength{\arraycolsep}{0.2em}
\begin{eqnarray}
    Q^{\pi}(s_t, a_t) & = & \mathop{\mathbb{E}_{a\sim \pi}\left[ r_t + \D V^{\pi}(s_{t+1})\mid{s_t, a_t}\right]}\\
                    & = & r_t + \D \cdot \mathop{\mathbb{E}_{a\sim \pi}\left[ Q^{\pi}(s_{t+1}, a_{t+1})\right]}
\end{eqnarray}
\setlength{\arraycolsep}{5pt}
\end{subequations}

In policy-based methods, the objective of the RL is to find an optimal policy $\pi_{\theta}^*: \mathcal{O} \mapsto \aset$ to maximize the cumulative reward along a state-action trajectory $\tau:= \{s_0, a_0, s_1, a_1, ...\}$ via adjusting the weights $\theta$ of the parameterized policy $\p_{\theta}$ over finite time steps (a training episode). The objective function can be formulated as
\begin{equation}
{\mathcal{L}({\p}_{\theta})} = \mathop{\mathbb{E}_{{s_0 \sim d_0},{\tau}\sim \pi_\theta}\left[ \sum_{t = 0}^{t = T}{\D}^{t}{r_{t}\mid{s_0}}\right]}
\end{equation}
The optimal policy is obtained by:
\begin{equation}
{{{\p}}^{*}_{\theta}} = \argmax_{\theta}{\mathcal{L}({\p}_{\theta})}
% =& \argmax_{\pi}V^{\pi_\theta}(s_0) \\
% =& \argmax_{\pi}{\mathop{\mathbb{E}_{\tau}\sim \pi_\theta}\left[ Q^{\pi_\theta} (s_0,{\tau})\right]}
\end{equation}
\subsubsection{Policy Gradient Theorem}
Within policy-based methods, the state value function $V^{\pi_\theta}(s_t)$ is always approximated by state-action value function $Q^{\pi_\theta}(s_t,a_t)$. In the actor-critic (AC) theory \cite{konda1999actor}, the policy network $\pi_\theta(a_t \mid o_t)$ is the actor network and the state value function approximator $Q^{\pi_\theta}(s_t, a_t)$ is the critic network. The optimal policy $\pi_\theta^*$ can be obtained via policy iteration along the gradient of the critic network (performance objective $\mathcal{L}({\p}_{\theta})$) according to the policy gradient theorem \cite{sutton1999policy}. The policy update with learning rate $\alpha_t$ is:
\begin{equation}
    \nabla_\theta{\mathcal{L}({\p}_{\theta})} = \mathop{\mathbb{E}_{{s_t \sim \Tfun },{a_t}\sim \pi_\theta}\left[\nabla_\theta \log \pi_\theta(a_t \mid o_t)  Q^{\pi_\theta}(s_t, a_t)\right]}
\end{equation}
\begin{equation} \label{sga}
    \theta_{t+1} = \theta_t + \alpha_t \nabla_{\theta_t}{\mathcal{L}({\p}_{\theta_t})}
\end{equation}

To reduce the variance of the gradient during the training, an advantage function \cite{sutton2018reinforcement} $A^\pi(s_t, a_t) := Q^\pi(s_t, a_t)-V^\pi(s_t)$, which measures the performance of an action, is commonly adopted in the stochastic gradient ascent algorithm (\ref{sga}). However, an estimator of the advantage function $\hat{A}^{\pi_\theta}(s_t, a_t)$ is considered in most existing algorithms since the accurate value of the advantage function is hard to obtain in real time. Hence, the variance-reduced gradient can be described as follows:
\begin{equation} \label{Adv}
    \nabla_\theta{\mathcal{L}({\p}_{\theta})} = \mathop{\mathbb{E}_{{s_t \sim \Tfun },{a_t}\sim \pi_\theta}\left[\nabla_\theta \log \pi_\theta(a_t \mid o_t)  \hat{A}^{\pi_\theta}(s_t, a_t)\right]}
\end{equation}

\subsubsection{Proximal Policy Optimization Algorithm}
The proximal policy optimization (PPO) algorithm \cite{schulman2017proximal} is an efficient and robust actor-critic algorithm that uses a ``clipped” surrogate objective function and on-policy learning. In PPO, the clipped surrogate objective function is formulated with \textit{advantage estimates} $\hat{A}^{\pi_\theta}_t :=\hat{A}^{\pi_\theta}(s_t, a_t)$ over fixed $T$ timesteps:
\begin{equation} \label{ppo}
    {\mathcal{L}^{clip}({\p}_{\theta})} = \mathop{\mathbb{E}_{t}\left[\min(\mu_t(\theta)\hat{A}^{\pi_\theta}_t, clip(\mu_t(\theta), 1-\epsilon, 1+\epsilon)\hat{A}^{\pi_\theta}_t))\right]}
\end{equation}
where $\mu_t(\theta)$ denotes the policy probability ratio $\mu_t(\theta) = \frac{\pi_{\theta}(a_t \mid o_t)}{\pi_{\theta_{old}}(a_t \mid o_t)}$ and $\epsilon$ is a hyperparameter controlling the clip range. With the policy probability ratio and the clip scheme, the PPO forms a low bound on the performance of the policy $\pi$ and keeps the policy updated without far deviation from the old policy. In addition, the advantage estimates $\hat{A}^{\pi_\theta}_t$ in PPO enable efficient reuse of training data.

\section{Multi-Pursuit Evasion with Targeted Navigation Problem}
\subsection{Problem Formulation}
In this section, we introduce the idea of the MPETN problem for drones and formulate it into an adversarial mixed game described with MDPs, which lies the foundation of our control algorithm AMS-DRL. Note that our multi-pursuit evasion problem in this paper considers one runner drone against a chaser drone swarm with the same speed. The formulated problem and the proposed algorithm can be extended to a large-scale scenario where massive chaser drones of a finite number are involved. We assume that the runner only knows the relative positions of the chasers and the target, while all chasers can obtain the relative positions of the runner and other chasers. The positions of all agents and the target are randomly placed with {a uniform} distribution in a constrained sample space. The system dynamics are unknown to each agent, i.e., the states cannot be predicted directly by solving system differential functions.
\subsubsection{MDP in Adversarial Games}
The pursuit evasion problem in this paper involves two adversarial teams, namely a runner team (single-agent) $\mathcal{M}^r:\langle \sset, \mathcal{O}^r, \aset^r, \Tfun, \rset^r, \D \rangle$ and a chaser team (multi-agent) $\mathcal{M}^c:\langle \mathcal{N}, \sset, \{\mathcal{O}^c_i\}_{i \in \mathcal{N}}, \{\mathcal{\aset}^c_i\}_{i \in \mathcal{N}}, \Tfun, \{\mathcal{\rset}^c_i\}_{i \in \mathcal{N}}, \D \rangle$, where $\sset := \sset^c \cup \sset^r $ defines the state space covered by all chaser agents $\sset^c$ and the runner agent $\sset^r$, $\mathcal{N} = \{1,...,N\}$ denotes the set of finite $N \ge 1$ chaser agents, $\mathcal{O}^c_i$, ${\aset}^c_i$, and $\mathcal{\rset}^c_i$ are the observation space, action space, and reward function of the chaser agent $i$, respectively. Let $\aset := {\aset}^c_1 \times \dots \times {\aset}^c_N \times \aset^r$, then $\Tfun:\sset \times \aset \times \sset \mapsto [0,1] $ defines the transition probability from state $s_t \in \sset$ to next state $s_{t+1} \in \sset$ with joint action $a_t \in \aset$.
The goal of the runner agent is to reach the target $g$ and avoid the attack from the chaser team while the chaser team is trying to crash down the runner agent, i.e., to minimize the cumulative reward of the runner agent. Hence, the pursuit-evasion problem for drones can be formulated as an adversarial mixed game with two adversarial teams $\{\mathcal{M}^r, \mathcal{M}^c\}$. The objective of the pursuit-evasion problem can be described as:
\begin{subequations} \label{mixed_game}
\setlength{\arraycolsep}{0.2em}
\begin{eqnarray}
    \pi_{\theta}^{r*} &=& \argmax_{a^r_{t} \sim \p_{\theta}^{r}}{\mathcal{L}({\p_{\theta}^r}\mid{\p_{\theta'}^{c*}})} \\
    s.t. ~~ \p_{\theta'}^{c*} &=& \argmin_{\{a^c_{i,t}\} \sim \p_{\theta'}^{c}}{\mathcal{L}({\p}_{\theta}^r)} \label{mixed_game_b} \\
    \dot{\bm{x}} &=& g(\bm{x}, \bm{u}) \\
    \bm{u}_{min} &\leq& \bm{u} \leq \bm{u}_{max} 
\end{eqnarray}
\end{subequations}
where $\pi^c := \prod_{i\in \mathcal{N}}\pi_i^c(a^c_i \mid{s})$ is the joint chaser policy and $\p^r$ is the runner policy. Subscripts $\theta$ and $\theta'$ mean the {policies} $\pi^r$ and $\pi^c$ are parameterized with different neural networks. $g(\cdot)$ is the dynamics constraint, and $\bm{u}_{min}$ and $\bm{u}_{max}$ are the input bounds. In the game theory \cite{fudenberg1991game}, the optimal solution of a game is always achieved at a NE. The NE of the adversarial mixed game $\{\mathcal{M}^r, \mathcal{M}^c\}$ is defined as follows.

\begin{definition}[Nash Equilibrium of Adversarial Mixed Game]
A Nash Equilibrium of the adversarial mixed game described with MDPs $\mathcal{M}^r:\langle \sset, \mathcal{O}^r, \aset^r, \Tfun, \rset^r, \D \rangle$ and $\mathcal{M}^c:\langle \mathcal{N}, \sset, \{\mathcal{O}^c_i\}_{i \in \mathcal{N}}, \{\mathcal{\aset}^c_i\}_{i \in \mathcal{N}}, \Tfun, \{\mathcal{\rset}^c_i\}_{i \in \mathcal{N}}, \D \rangle$ is a joint policy $\pi^*:= \{\pi_1^{c*}, \dots, \pi_N^{c*}, \pi^{r*}\}$, such that for any $s \in \sset$ and $i \in \mathcal{N} \cup \{r\}$,
\begin{equation} \label{NE_def}
    V^i_{\p_i^*, \p_{-i}^*}(s) \ge V^i_{\p_i, \p_{-i}^*}(s), ~~for~any ~\pi_i
\end{equation}
where $\{r\}$ {denotes} the runner agent, and $-i$ means all other agents except for agent $i$.
\end{definition}

\begin{theorem}[]
\label{existence}
For the adversarial mixed game $\{\mathcal{M}^r, \mathcal{M}^c\}$, there always exists a unique NE, and this unique NE is the optimal policy $\pi_{\theta}^{r*}$ for the runner agent, i.e., the solution of the pursuit-evasion problem (\ref{mixed_game}).
\end{theorem}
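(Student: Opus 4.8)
The plan is to exploit the bilevel (minimax) structure of problem (\ref{mixed_game}) so as to reduce the adversarial mixed game to an effectively two-player zero-sum Markov game, after which existence will follow from a Shapley-type minimax argument and uniqueness from a contraction argument. First I would observe that, although the two teams' rewards are not globally zero-sum --- the runner's success in reaching the target does not directly penalize the chasers --- the bilevel problem (\ref{mixed_game}) that defines the solution concept forces the chaser team, through constraint (\ref{mixed_game_b}), to minimize exactly the runner's cumulative reward $\mathcal{L}(\p_{\theta}^{r})$. Treating the joint chaser action $\{a^c_{i,t}\}_{i \in \mathcal{N}} \in {\aset}^c_1 \times \dots \times {\aset}^c_N$ as the move of one aggregate adversary, the runner-versus-aggregate-chaser interaction becomes a two-player zero-sum game in which the runner maximizes and the chaser minimizes the single quantity $\mathcal{L}(\p_{\theta}^{r})$. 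This reduction is what lets classical zero-sum machinery apply to the ``mixed'' game.

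Next I would establish existence through the minimax Bellman operator on value functions $V:\sset \mapsto \mathbb{R}$,
\begin{equation}
    (\mathcal{T}V)(s) = \max_{\p^{r}}\min_{\p^{c}} \mathbb{E}\!\left[\rset^{r}(s,a) + \D\, V(s')\right],
\end{equation}
whose inner stage game has a value by a minimax theorem (von Neumann for finite, Sion for compact continuous action sets). Using the bounded reward assumption $r \in [\rmin,\rmax]$ and $\D \in (0,1)$, I would show $\mathcal{T}$ is a $\D$-contraction in the sup-norm, so Banach's fixed-point theorem gives a unique optimal value $V^{*}$; the stationary policies attaining the stagewise saddle point at every $s$ then form a stationary saddle-point equilibrium $(\p^{r*},\p^{c*})$, which is a Nash equilibrium of the reduced game in the sense of Shapley's theorem for discounted stochastic games.

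Uniqueness I would handle in two layers, and this is where the hard part lies. The contraction already forces $V^{*}$ to be unique, but generic general-sum Markov games admit many equilibrium strategies, so the policy-level claim cannot rest on existence alone. To pin down the strategy I would impose a regularity condition --- strict concavity of $\rset^{r}(s,\cdot)$ in the runner's action jointly with strict convexity in the aggregate chaser's action, i.e. a diagonal-strict-concavity (Rosen-type) condition --- guaranteeing that the stage-game saddle point is unique for each $s$. Composing these unique stagewise responses yields a unique stationary equilibrium, hence a unique $\p_{\theta}^{r*}$; absent such strictness one could only guarantee uniqueness of the runner's value, not of its policy, which is precisely the delicate point I would need the paper's reward design to supply.

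Finally, I would identify this unique NE with the solution of (\ref{mixed_game}). Specializing the equilibrium condition (\ref{NE_def}) to $i=r$ gives $V^{r}_{\p^{r*},\p^{c*}}(s) \ge V^{r}_{\p^{r},\p^{c*}}(s)$ for all $\p^{r}$ and all $s$, so $\p^{r*}$ is a best response to the chasers' equilibrium policy $\p^{c*}$. Since $\p^{c*}$ is exactly the minimizing response of (\ref{mixed_game_b}), this best-response condition coincides with the outer maximization of (\ref{mixed_game}); therefore the unique NE furnishes the optimal runner policy $\p_{\theta}^{r*}$ and solves the pursuit-evasion problem.
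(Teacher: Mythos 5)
Your proposal is sound in outline but follows a genuinely different route from the paper's proof. The paper collapses the chaser team into a single player by invoking homogeneity (all chasers share one policy and one reward), then obtains existence by citing Nash's theorem for finite strategic games --- asserting that the control constraints (\ref{ctl_contraint}) make the action space \emph{finite} --- and obtains the identification with (\ref{mixed_game}) by asserting that each chaser's value function equals the negative of the runner's objective up to the constant terminal reward $r_T$ (the near-zero-sum structure supplied by the reward design and Remark 1). You instead aggregate the chasers through the bilevel constraint (\ref{mixed_game_b}) itself, which makes the reduced two-player game zero-sum by construction, and you obtain existence from Shapley-type machinery: a minimax Bellman operator that is a $\D$-contraction in the sup-norm, hence a unique value $V^{*}$ and stationary saddle-point policies. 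Your route is arguably better matched to the actual setting: the paper's finiteness claim is strained, since bounded continuous velocity commands give a compact (not finite) action set, and Nash's theorem for static games does not by itself cover discounted Markov games; your contraction argument handles both issues natively. The final identification step --- specializing (\ref{NE_def}) to $i=r$ and matching the best response against $\p^{c*}$ with the outer maximization of (\ref{mixed_game}) --- coincides in both arguments.

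On uniqueness, your treatment is more careful than the paper's, and it exposes the soft spot in the published argument. The paper supposes two equilibria $\pi^{\alpha}$ and $\pi^{\beta}$, derives $V^i_{\p_i^{\alpha}, \p_{-i}^*}(s) \equiv V^i_{\p_i^{\beta}, \p_{-i}^*}(s)$, and concludes $\pi^{\alpha} \equiv \pi^{\beta}$; but equality of values does not imply equality of policies --- even zero-sum games generically admit multiple interchangeable saddle-point strategies attaining the same value. You correctly isolate this as the delicate step and patch it with a Rosen-type diagonal strict concavity/convexity condition so that the stage-game saddle point is unique at each state, while flagging that such a condition must come from the reward design; the paper imposes no such hypothesis and simply asserts the implication. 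So your proof, under its extra regularity assumption, establishes policy uniqueness rigorously, whereas the paper's proof as written only establishes uniqueness of the equilibrium value.
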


\begin{proof}
Firstly, we prove the existence and uniqueness of NE of the adversarial mixed game $\{\mathcal{M}^r, \mathcal{M}^c\}$. Since the chaser team $\mathcal{M}^c$ consists of homogeneous agents with a common reward function and same policies, i.e., $\pi_{i\in\mathcal{N}}^c = \pi_1^c$, we can regard the chaser team as one player $c$. Therefore, the adversarial mixed game $\{\mathcal{M}^r, \mathcal{M}^c\}$ shrinks into a two-player mixed game with player $\{c,r\}$. Due to the control constraints (\ref{ctl_contraint}), the action space of the agents is finite. According to \textbf{Theorem 1.1} in \cite{fudenberg1991game}, there always exists an NE for finite strategic games. Suppose there are two NE strategies $\pi^{\alpha}$ and $\pi^{\beta}$, based on (\ref{NE_def}), we have $V^i_{\p_i^{\alpha}, \p_{-i}^*}(s) \ge V^i_{\p_i^{\beta}, \p_{-i}^*}(s)$ and $V^i_{\p_i^{\beta}, \p_{-i}^*}(s) \ge V^i_{\p_i^{\alpha}, \p_{-i}^*}(s)$ $\Rightarrow$ $V^i_{\p_i^{\alpha}, \p_{-i}^*}(s) \equiv V^i_{\p_i^{\beta}, \p_{-i}^*}(s)$ at any time. Hence, $\pi^{\alpha} \equiv \pi^{\beta}$ holds, and it is the unique NE of the adversarial mixed game.

From \textbf{Definition 1}, let $i \in \mathcal{N}$, we have $V^i_{\p^{c*}, \p^{r}}(s) \ge V^i_{\p^c, \p^{r}}(s) = -({\mathcal{L}({\p}_{\theta}^r)}-r_T)$ for any $\pi^c$, where $r_T$ is the positive constant terminal reward of the runner agent. It follows that $-V^i_{\p^{c*}, \p^{r}}(s) \leq ({\mathcal{L}({\p}_{\theta}^r)}-r_T) < {\mathcal{L}({\p}_{\theta}^r)}$. Therefore, $\p^{c*}$ is the desired $\p_{\theta'}^{c*}$ in (\ref{mixed_game_b}). Similarly, let $i \in \{r\}$, we have $V^i_{\p^{r*}, \p^{c*}}(s) \ge V^i_{\p^r, \p^{c*}}(s) = {\mathcal{L}({\p}_{\theta}^r\mid {\p}_{\theta'}^{c*}})$ for any $\p^r$ under $\p^{c*}$, i.e., ${\p}_{\theta}^{r*} = \p^{r*}$. This completes the proof.
\end{proof}

\begin{remark}
Note that the terminal reward of the runner agent $r_T$ is assumed to be a positive constant when the runner agent reaches the target. Considering an indoor environment, there is a probability of the runner agent colliding with the wall, which gives a negative reward $r'_T$. In this case, we assign an opposite reward $-r'_T$ to each chaser agent since the runner agent is trained to fly away from the wall during the initial stage and the collision can be attributed to the pursuit of the chaser agents. Therefore, \textbf{Theorem 1} holds even though the terminal state of the runner drone is the wall collision.
\end{remark}

\subsection{RL Description of MPETN}
In this section, we describe the MPETN problem from the RL formulation. The environment setup, state space, observation space, action space, state transition function, and reward function are described in detail.
\subsubsection{Environment Setup}
To train and evaluate the policies of chaser and runner agents, a high-fidelity simulation environment is created using the Unity game engine. Our simulation environment is an obstacle-free, constrained indoor room, which is illustrated in Fig. \ref{figure1}. During training, all three drone agents and the red target box will spawn in random locations every episode to prevent any agent from learning undesirable behaviors such as flying toward a fixed spawn point.

\begin{figure}[!tbp]
\setlength{\abovecaptionskip}{0.cm}
\setlength{\belowcaptionskip}{-0.cm}
\centering
\subfigure[Top view]{
\includegraphics[width=3.1in,trim=0 0 0 0,clip]{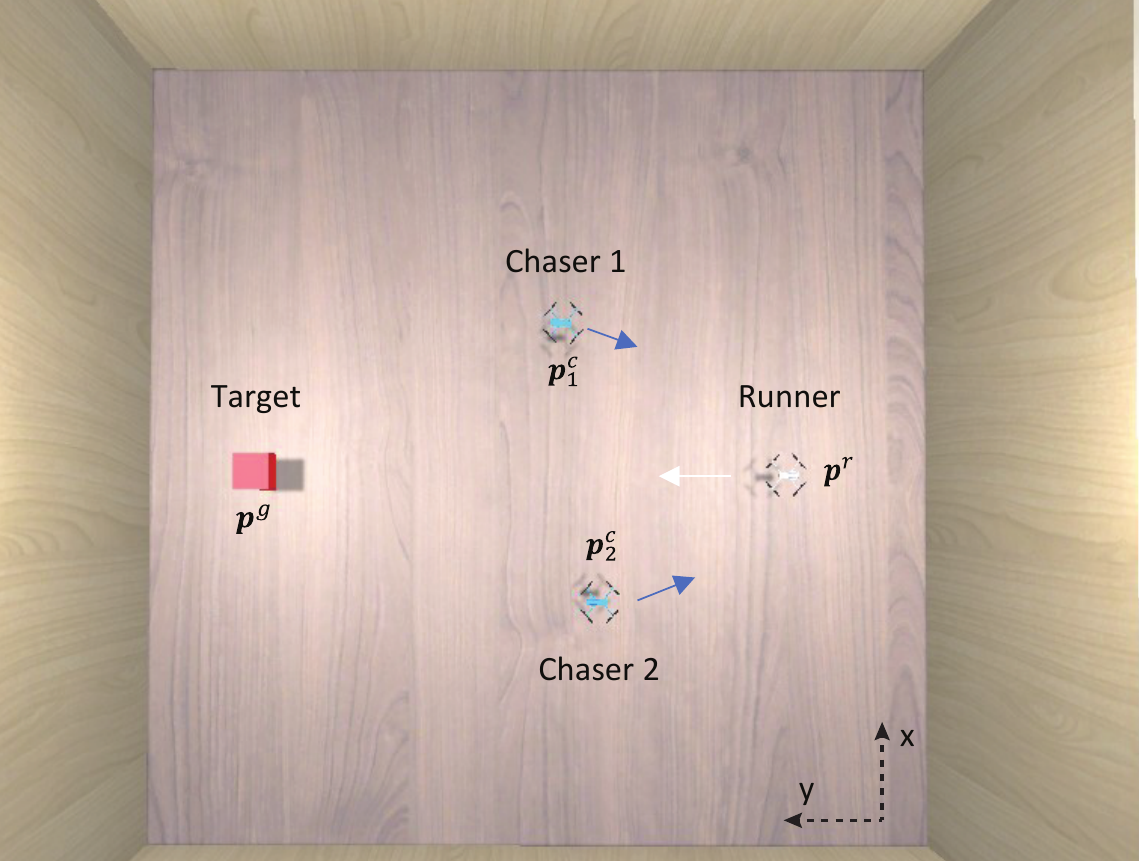}
}
\quad
\subfigure[Side view]{
\includegraphics[width=3.1in,trim=0 0 0 0,clip]{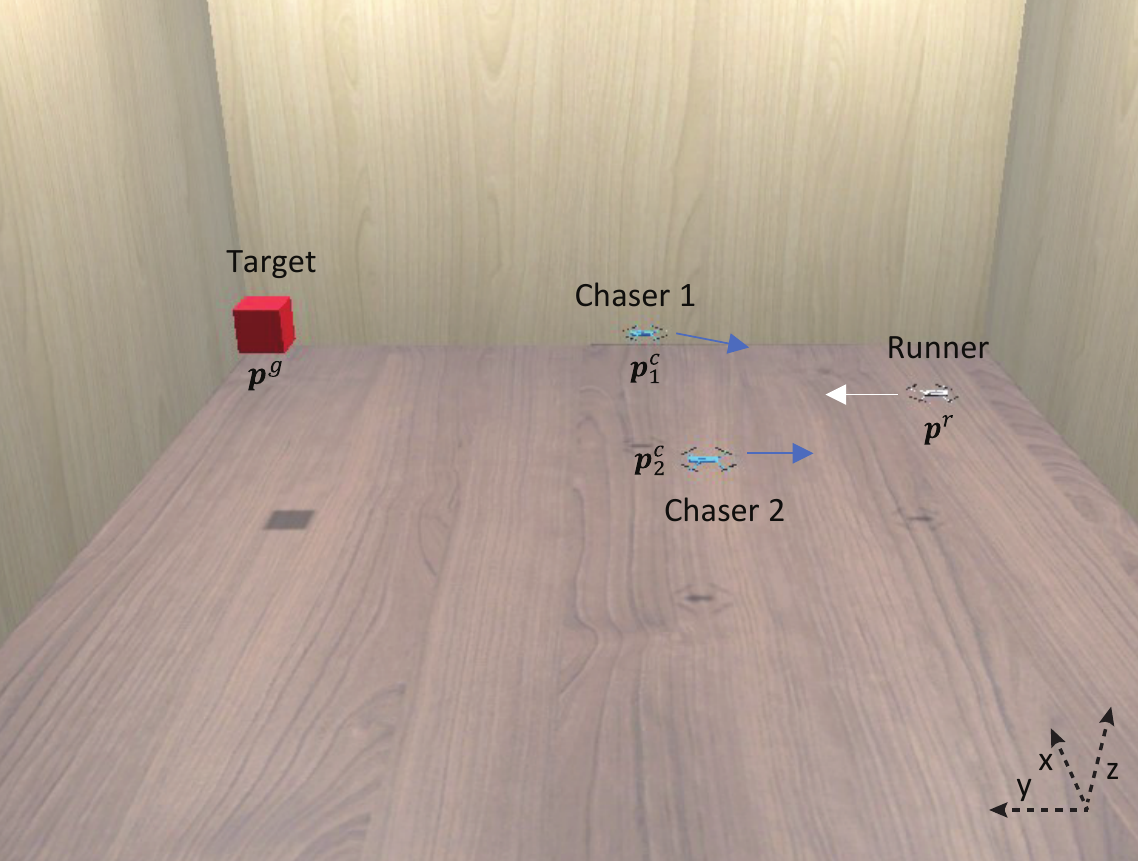}
}

      \caption{The simulation environment for the 3D multi-pursuit evasion with targeted navigation scenario, which consists of two blue chaser drones (pursuers) and one white runner drone (evader) flying towards the target (red box). {All simulated objects are {bound} with box colliders ($L\times W \times H$) for collision detection.} The chasers aim to crash down the runner while the runner is required to safely reach the target.}
      \label{figure1}
\end{figure}

The components of each object are outlined by {Unity’s box collider}, and all objects are tagged as target, chaser, and runner, respectively. The wall, floor, and ceiling are tagged as wall colliders. These tags allow for the detection of collisions with each other, with different rewards.

\subsubsection{State Space}
Since the drone is an under-actuated system, in a three-dimensional (3D) space, the state of a drone $i \in \mathcal{N} \cup \{r\}$ can be represented as $s_i := [x_i,y_i,z_i,\psi_i]^T\in \mathbb{R}^4$, where $x_i,y_i,z_i$ are the elements of position $\boldsymbol{p}_i$, and $\psi$ is the heading angle. We denote the positions of target, runner, and chasers as $\boldsymbol{p}^{g}$, $\boldsymbol{p}^{r}$, and $\boldsymbol{p}^{c}$, respectively. The global state can be represented as $s:=\{\boldsymbol{p}^{g}, \{s_i\}_{i \in \mathcal{N} \cup \{r\}}\}$.
 
\subsubsection{Observation Space}
In a partially observed scenario, the agents draw actions with a certain probability from observations. To improve the generalization of the trained model, relative positions are considered in the observation space. The observation for the runner agent $o^r$ consists of {the} relative position towards the target $\boldsymbol{p}^{rg} = \boldsymbol{p}^{g} - \boldsymbol{p}^{r}$ and relative positions towards chaser agents $\{\boldsymbol{p}_j^{rc}\}_{j\in \mathcal{N}}$, where $\boldsymbol{p}_j^{rc} = \boldsymbol{p}_j^{c} - \boldsymbol{p}^{r}$. The concatenated observation is $o^r = [\boldsymbol{p}^{rg}, \{\boldsymbol{p}_j^{rc}\}_{j\in \mathcal{N}}] \in \mathbb{R}^{(N+1)\times 3}$. 
For chaser agents $\{c_i\}_{i \in \mathcal{N}}$, they share the positions with each other and can track the position of the runner all the way. Hence, the observation of chaser agent $c_i$ is $o^c_i = \left[ \{\boldsymbol{p}_{ij}^{cc}\}_{j\in \mathcal{N} \setminus \{i\}}, \boldsymbol{p}_i^{cr}\right] \in \mathbb{R}^{N\times 3}$, where $\boldsymbol{p}_{ij}^{cc}$ is the relative position from the chaser $c_i$ to the chaser $c_j$, i.e., $\boldsymbol{p}_{ij}^{cc} = \boldsymbol{p}_{j}^{c} - \boldsymbol{p}_{i}^{c}$ and $\boldsymbol{p}_i^{cr}$ is the relative position from the position of chaser $c_i$ to the position of runner $r$, i.e., $\boldsymbol{p}_{i}^{cr} = \boldsymbol{p}^{r} - \boldsymbol{p}_{i}^{c}$.

\subsubsection{Action Space}
In this work, the action spaces for the runner drone $a^r$ and the chaser drone $a^c$ are the same. The selected actions $a^c = [\hat v_x^{B,c}, \hat v_y^{B,c}, \hat v_z^{B,c}, \hat \omega_z^c] \in \mathbb{R}^4$ and $a^r = [\hat v_x^{B,r}, \hat v_y^{B,r}, \hat v_z^{B,r}, \hat \omega_z^r] \in \mathbb{R}^4$ are four desired velocity commands generally used in the Robot Operation System (ROS), where $\hat v_x^B \in (-v_{max},v_{max})$, $\hat v_y^B \in (-v_{max},v_{max})$, $\hat v_z^B \in (-v_{max},v_{max})$ and $\hat \omega_z \in (-\omega_{max},\omega_{max})$ are the desired velocities of the quadrotor in the body frame $O_B$. The action space $\aset$ consists of these four continuous actions that drive drones to maneuver in the obstacle-free 3D space.

\subsubsection{State Transition Function}
Since we consider the high-level controller based on the selected velocity commands, the quadrotor dynamics (1a)-(1c) can be simplified as:
\begin{equation} \label{simsys}
    \dot{\bm{x}} = \bm{g}(\bm{x}) \cdot \bm{u} + \bm{\omega}
\end{equation}
where $\bm{x} = [x,y,z,\psi]^T$, $g(\bm{x})$ is the transition matrix, and $\bm{u} =[\hat v_x^B, \hat v_y^B, \hat v_z^B, \hat \omega_z]^T$. $\bm{\omega}$ represents the process noise and uncertainties, which satisfies the Gaussian distribution $p(\bm{w})\sim\aleph (0,\bm{Q})$ with $\bm{Q} = diag\{\sigma _{x}^2,\sigma _{y}^2, \sigma _{z}^2, \sigma _{\psi}^2\}$. $\bm{g}(\bm{x})$ is given by:
\begin{equation}
\bm{g}(\bm{x}) = \left[ {\begin{array}{*{20}{c}}
\cos{\psi} & \sin{\psi} & 0 & 0 \\
-\sin{\psi}& \cos{\psi}& 0 & 0 \\
0 & 0& 1 &0 \\
0& 0 &0 &1
\end{array}} \right]
\end{equation}
The simplified system (\ref{simsys}) is a stochastic continuous system with process noise. {Note that even with this simplification, the drone is driven with a full 6-DOF model throughout both training and evaluation phases in simulation, assuming a high-bandwidth controller capable of precisely following these velocity commands.}

\subsubsection{Reward Function}
The reward functions determine the desired behavior of the agents during the training process. The reward function consists of two parts: one is the existential penalty $\rset_P = -w_1/T_{max}$ of the team, where $T_{max}$ is the allowable number of time steps in an episode and $w_1$ is the adjustable weight; the other is the task reward $\rset_T$. The total reward of an agent is $\rset = \rset_P + \rset_T$. The existential penalty serves to accelerate the exploration progress, while the task reward serves to guide the drones in completing their tasks.

For runner agents, there are three objectives: reaching the desired target while evading the attacks from the runner and avoiding colliding with the wall. Hence, the task reward for the runner agent is specified as:
\begin{equation} \label{terminalRewardR}
\setlength{\nulldelimiterspace}{0pt}
{\rset^r_T} = \left\{ \begin{array}{l}
{C_1(1-\|\boldsymbol{p}^{rg}_T\|/\|\boldsymbol{p}^{rg}_0\|}),\quad if\;step\;ends\;at\;T\\
-C_2, \quad if\;agent\;crashes\;
\end{array} \right.   
\end{equation}
where $C_1$ and $C2$ are positive constants, i.e., $C_1, C2 \in \mathbb{R} ^{+}$. $\boldsymbol{p}^{rg}_T$ and $\boldsymbol{p}^{rg}_0$ are the terminal relative position and the initial relative position.
Note that the runner agent crashes with the chasers or wall, receiving the same penalty. The first item is to encourage the runner drone to reach the target. 

For chasers, their objectives are to crash down the runner while avoiding colliding with other teammates and the wall. The task reward for the chaser agent $c_i$ is:
\begin{equation} \label{terminalRewardC}
\setlength{\nulldelimiterspace}{0pt}
{\rset^{c_i}_T} = \left\{ \begin{array}{l}
{C_1},\quad if\;team\;captures\; the\; runner\;\\
{-C_2}, \quad if\;agent\;collides\; with\; the\; wall\; \\
{-C_2\mathcal{H}(d_{\epsilon} - d^i_{min})}, if\;agents\; are\; too\; close\;
\end{array} \right.   
\end{equation}
where $\mathcal{H}(\cdot)$ is a Heaviside step function (unit step function), $d^i_{min}$ is the minimum distance of the chaser $c_i$ with other chasers, i.e., $d^i_{min} = min(\{\|\boldsymbol{p}_{ij}^{cc}\|_2\}_{j\in \mathcal{N} \setminus \{i\}})$ and $d_{\epsilon}$ is the preset risk threshold.

\section{Asynchronous Multi-Stage Deep Reinforcement Learning Approach}
\subsection{AMS-DRL Algorithm}
According to Theorem \ref{existence}, to solve the MPETN problem (\ref{mixed_game}) becomes to find the NE of the adversarial mixed game $\{\mathcal{M}^r, \mathcal{M}^c\}$. Here, we propose the AMS-DRL approach to approximate the NE of the game. AMS-DRL features a cold-start learning stage and an asynchronous learning stage. The AMS-DRL approach is described in Algorithm \ref{alg::target} in detail. 

\subsubsection{Cold-Start Learning Stage}
At the Cold-Start Learning (CSL) stage $S_0$, the objective is to train the runner drone to navigate toward a target. The runner drone will learn to approach the stationary target without colliding with walls. Note that the position of the target is randomly generated during the training process. This stage necessitates an initial navigation policy for the runner drone.

\begin{algorithm}[!tb]
\caption{AMS-DRL}
\label{alg::target}
% \begin{algorithmic}[1]
\textbf{Start the CST stage $S_0$}\\
\KwIn{Initial runner policy $\p^r_{0,\theta_0}$ for the CSL stage.} 
At the start of training: \\
{Initialize training parameters, such as discount factor $\gamma$, learning rate $\alpha_0$, batch size $bs$}\;
\For{episode ~= 0 : maximum episodes}
{
Randomly spawn $\boldsymbol{p}^g$, $\boldsymbol{p}^r$ and $\psi_r$ within the constrained space. \\
\For{step $t=0:T$}
{Execute the policy $\p^r_{0,\theta_t}$ \\
Collect data $\{s_t, a_t, r_t\}$ \\
Calculate the objective function $\mathcal{L}({\p}^r_{\theta_t})$ \\
Update $\theta_t$ via policy gradient as (\ref{sga}) and (\ref{Adv})\\
}
}
% \end{algorithmic}
\KwOut{Trained runner policy $\p^r_{0,\theta}$}
% \hline
\textbf{Start the AL stage}\\
\KwIn{Pre-trained runner policy $\p^r_{0,\theta}$, shared initial chaser policy $\p^c_{1,\theta'_0}$, threshold $\eta$} 
\For{Phase $S_i=S_1:S_k$}
{
    \If{$i$ is odd}{Set $\p^c_{i,\theta'}$ as training mode while set $\p^r_{i,\theta}$ as inference mode, i.e., $\p^r_{i,\theta}=\p^r_{i-1,\theta}$}
    \Else{Inverse the mode for $\p^c_{i,\theta'}$ and $\p^r_{i,\theta}$ }
    \For{episode ~= 0 : converged episodes}
{
Randomly spawn the initial global state within the constrained space $s_0 :=\{\boldsymbol{p}^{g}, \{s_i\}_{i \in \mathcal{N} \cup \{r\}}\}$ \\
\For{step $t=0:T$}
{Execute the runner policy $\p^r_{i,\theta_t}$ and the shared chaser policy $\p^c_{i,\theta'_t}$ \\
Collect data $\{s_t, a_t, r_t\}$ from trained agents \\
Calculate the objective function $\mathcal{L}(mode({\p}^r_{\theta_t},{\p}^c_{\theta'_t}))$ \\
Update $mode(\theta_t,\theta'_t)$ via policy gradient as (\ref{sga}) and (\ref{Adv})\\
}
}
Calculate the success rates $sr^r$ and $sr^c$ over the test $ws$ episodes\\
\If{$\|sr^r-sr^c\| \leq \eta$}{Break}
}
\KwOut{The optimal policies, $\p^{r*}_{\theta}$, $\p^{c*}_{\theta'}$}
\end{algorithm}

\subsubsection{Asynchronous Learning Stage}
Once the runner drone has successfully learned the targeted navigation behavior, multiple chaser drones are added to the scene in the Asynchronous Learning (AL) stage to generate multi-pursuit evasion with targeted navigation behavior. The runner drone is placed in inference mode with an initial policy from the CSL stage while the chaser drones start to train. During inference mode, the runner drone moves according to its previously learned behavior and does not improve upon it. The chaser drones are trained asynchronously until they learn to seek out the runner drone. With their basic fundamental behaviors learned, the chaser and runner drones are trained against each other so that they both learn new strategies while also combating the other side’s strategies. 

The training sequence of AMS-DRL is illustrated in Fig. \ref{figureseq}. There are several training phases in the AL stage, denoted by $S_1$ to $S_k$. During each phase, one team is put in training mode while the other is set in inference mode. The operation $mode(\cdot)$ returns the corresponding parameters once the mode of policy is set as training, e.g., $mode(\theta, \theta') = \theta'$ if $\p^c_{\theta'}$ is under training mode. Once the training policy converges, training for one team halts while that for the other team begins. Let $sr^r$ and $sr^c$ denote the success rate (SR) of the runner team and the chaser team over a window size of $ws$, respectively. The whole training stage ends for both teams when the difference in their success rates converges on a preset threshold $\|sr^r-sr^c\| \leq \eta$, which means an approximate NE. Note that it is a successful task when the runner reaches the target safely, while success for the chasers is to capture the runner or push the runner to walls.
\begin{theorem}[Converge Analysis]Assuming that policies $\p^{r}_{i,\theta}$ and $\p^{c}_{i, \theta'}$ converge in every stage/phase $S_i$ with policy improvement. Given any threshold $\eta \in (0,1)$, with finite phases $k$, the output of AMS-DRL converges to the optimal policies $\p^{r*}_{\theta}$, $\p^{c*}_{\theta'}$ with $\|sr^r-sr^c\| \leq \eta$.
\end{theorem}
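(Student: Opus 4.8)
The plan is to read the asynchronous learning stage as alternating best-response dynamics on the two-player max-min game established in Theorem~\ref{existence}, and then to argue that the bounded, monotone value sequences these dynamics generate must converge to the \emph{unique} Nash equilibrium, which forces the success-rate gap below any prescribed $\eta$ after finitely many phases. The per-phase convergence-with-improvement hypothesis is taken as given, so the real work is the outer loop across $S_1,\dots,S_k$.

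First I would fix notation: let $(\pi^{r*},\pi^{c*})$ be the unique NE guaranteed by Theorem~\ref{existence}, and recall from (\ref{mixed_game})--(\ref{mixed_game_b}) that, after collapsing the homogeneous chaser team into a single player, the game is effectively zero-sum in the runner's return $\mathcal{L}(\pi^r_\theta)$, with the runner maximizing and the chaser minimizing it. I would then encode one odd--even pair of phases as a full round of best response: in an odd phase the chaser, with the runner frozen, converges to a minimizer of $\mathcal{L}(\pi^r_{i-1})$, i.e.\ a best response $\pi^c_i = \mathrm{BR}^c(\pi^r_{i-1})$; in the following even phase the runner symmetrically converges to $\pi^r_{i+1} = \mathrm{BR}^r(\pi^c_i)$. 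The policy-improvement assumption guarantees each half-step is monotone in the training team's value.

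Next I would establish convergence of the induced value sequence. Because the instantaneous rewards are bounded, $r_t \in [\rmin,\rmax]$, and discounted by $\D \in (0,1)$, every value function is uniformly bounded, so the candidate values produced across rounds lie in a compact interval. A bounded sequence whose successive improvements are monotone under the best-response operator must converge, and by the uniqueness part of Theorem~\ref{existence} the only fixed point of the composed map $\mathrm{BR}^r \circ \mathrm{BR}^c$ is $(\pi^{r*},\pi^{c*})$; hence the policies generated across phases converge to the unique NE. Finally I would translate value convergence into the stated criterion: since $sr^r$ and $sr^c$ are continuous functionals of the joint policy and, at mutual best response, the outcome of a test episode is determined by the same equilibrium pair regardless of which team trained last, the gap $\|sr^r-sr^c\|$ converges to its equilibrium value; convergence of a real sequence means it is eventually within any tolerance, so there is a finite index $k$ with $\|sr^r-sr^c\|\le\eta$, which is exactly the algorithm's termination condition.

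The step I expect to be the main obstacle is the convergence claim in the third paragraph. Because the paper itself stresses that the pursuit-evasion setting is \emph{not} literally zero-sum, I cannot simply invoke a textbook zero-sum best-response convergence theorem; instead I must lean carefully on the max-min structure of (\ref{mixed_game}) together with the per-phase improvement hypothesis to rule out the cycling behavior that alternating best response can exhibit in general games, and to guarantee that the monotone value improvements genuinely shrink the exploitability gap to zero rather than stalling at a non-equilibrium plateau that never falls below $\eta$.
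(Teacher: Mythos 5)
Your outer-loop argument has two genuine gaps, and they sit exactly where the paper's proof does its real work. First, your final step is a non sequitur: from ``the gap $\|sr^r-sr^c\|$ converges to its equilibrium value'' you conclude the algorithm's test is eventually satisfied, but convergence only places the sequence within any tolerance \emph{of that equilibrium value}, not of zero. The stopping rule $\|sr^r-sr^c\|\le\eta$ can be met in finitely many phases only if the equilibrium gap itself is at most $\eta$, and nothing in your argument (in particular, not the uniqueness of the NE from Theorem 1) establishes that. The paper's proof avoids value functions entirely and works on the success-rate scale from the start: since every episode ends with exactly one team succeeding, it uses the complementarity $sr^r_i+sr^c_i=1$ to rewrite the stopping rule as $sr^r_k\in\left[(1-\eta)/2,\,(1+\eta)/2\right]$, and then argues the phase-indexed sequence of runner success rates must enter that interval. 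Second, your convergence mechanism is unresolved: alternating best response does not generate a monotone value sequence (the runner's value rises in its own training phase and falls in the chaser's phase), so ``bounded and monotone, hence convergent'' does not apply, and you flag this cycling problem yourself without closing it. The paper's substitute is a concrete interleaving order obtained from policy improvement: $1\ge sr^r_0>sr^r_2>\dots>sr^r_k>\dots>sr^r_3>sr^r_1\ge 0$, i.e., the even-phase (runner-trained) rates decrease from above while the odd-phase (chaser-trained) rates increase from below, with every even term above every odd term; this nested two-sided squeeze is what replaces your monotone-value claim and is the structure on which the interval-hitting conclusion is built.

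To be fair, the paper's own last step (that the nested sequence necessarily lands in $\left[(1-\eta)/2,\,(1+\eta)/2\right]$) is itself only heuristic: two interleaved monotone sequences can converge to a common limit away from $1/2$, or step over the interval, so the target proof is not airtight either. But a correct write-up in either style must (i) replace your monotone-value claim with the interleaving/squeeze structure or some other device that rules out cycling, and (ii) supply the missing link between ``policies converge to the NE'' and ``the success-rate gap at the NE is at most $\eta$''; your draft currently has neither, whereas the paper at least supplies (i).
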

\begin{proof}
    Let denote the SR of the runner policy $\pi^r_{i,\theta}$ from stage/phase $S_i$ as $sr^r_i \in [0, 1]$, for $i\in\{0,\dots, k\}$ and the SR of the chaser policy $\pi^c_{i,\theta'}$ from $S_i$ as $sr^c_i \in [0, 1]$, for $i\in \{1,\dots, k\}$. We have $sr^c_i+sr^r_i = 1$ for $i\in \{1,\dots, k\}$. In $S_0$, since there is no chaser around, $\pi^r_{0,\theta}$ achieves the highest SR $sr^r_0 = sr^r_{max}$. In phase $S_1$, the objective of the chaser policy $\pi^c_{1,\theta'}$ is {to} bring down the SR of $\pi^r_{1,\theta} = \pi^r_{0,\theta}$. Hence, $sr^r_1 < sr^r_0$ and $sr^r_1 = sr^r_{min}$ since $sr^c_1 = sr^c_{max} = 1- sr^r_1$ due to the Policy Improvement Theorem. For phase $S_2$, we can easily obtain $sr^r_0>sr^r_2>sr^r_1$ since we train the runner policy to achieve better navigation performance. For the following phases $S_i, i={3,\dots,k}$, similarly we have $1 \geq sr^r_0 > sr^r_2 > \dots> sr^r_k > \dots > sr^r_3 > sr^r_1 \geq 0$.
    
    Now, given any threshold $\eta$, we want to find a finite $k$, such that $\|sr^r-sr^c\| \leq \eta$, i.e., $(1-\eta)/2 \leq sr^r_k \leq (1+\eta)/2$. This forms an interval $\left[0.5-\eta/2, 0.5+\eta/2\right]$ for $sr^r_k$. Since we have a finite sequence of $sr^r_k$ such that $1 \geq sr^r_0 > sr^r_2 > \dots> sr^r_k > \dots > sr^r_3 > sr^r_1 \geq 0$, it's guaranteed that there is at least one $k$ such that $sr^r_k$ lies in this interval and policies converge to $\p^{r*}_{\theta}$, $\p^{c*}_{\theta'}$. This completes the proof.
\end{proof}

\begin{figure}[!tbp]
      \centering
%       \framebox{\parbox{3in}{
% }}
      \includegraphics[width=3.2in]{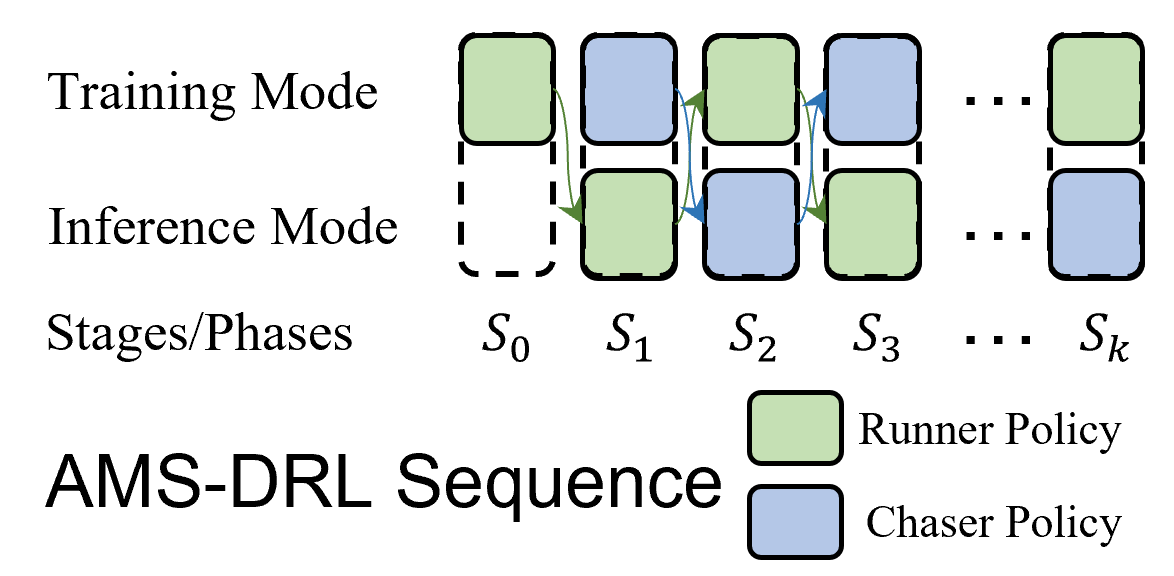}
      \caption{Illustration of the AMS-DRL training sequence for the shared chaser policy and the runner policy.}
      \label{figureseq}
\end{figure}

\begin{figure}[!tbp]
      \centering
%       \framebox{\parbox{3in}{
% }}
      \includegraphics[width=3.3in]{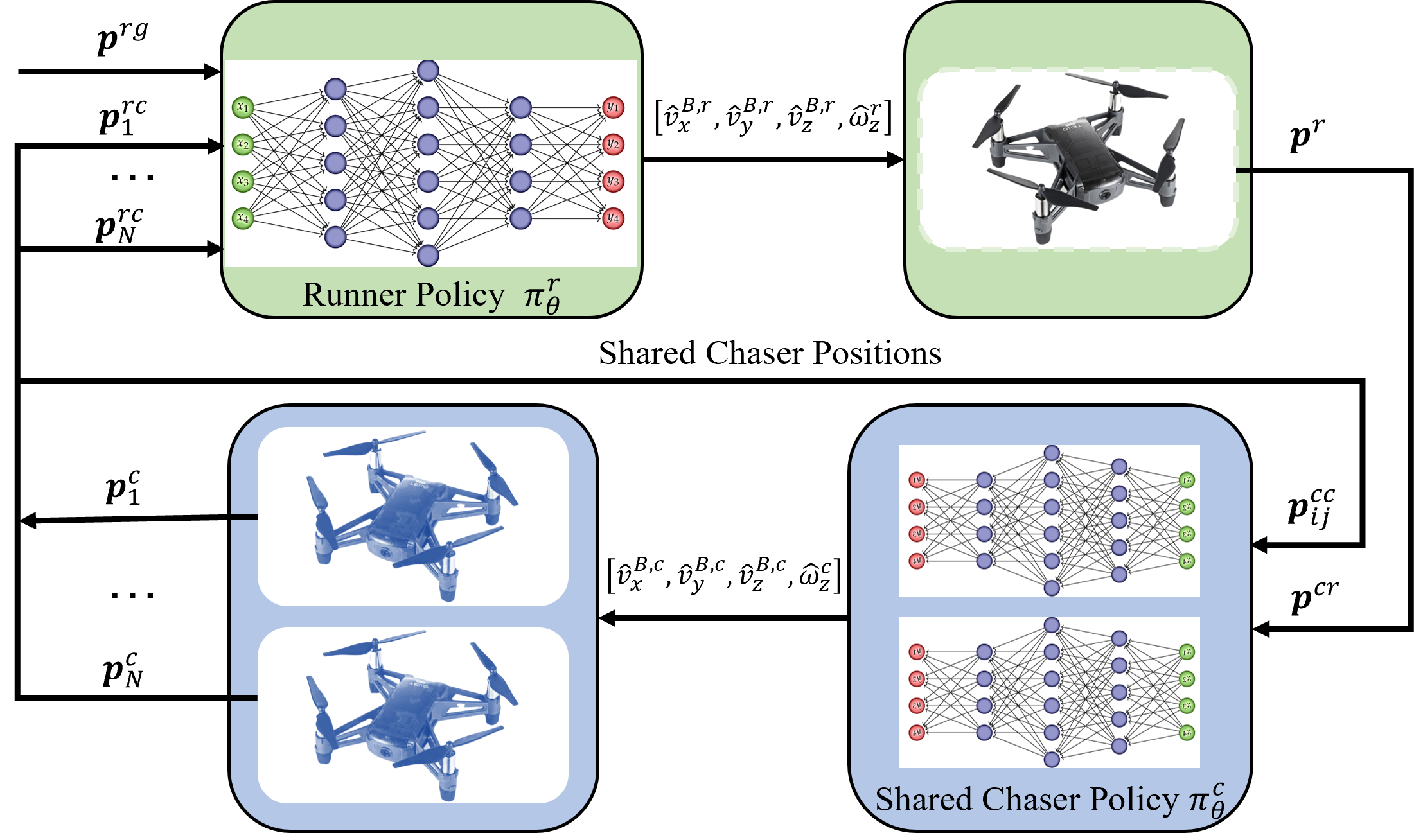}
      \caption{Illustration of the neural network architecture for chaser agents and the runner agent.}
      \label{figure3}
\end{figure}

\subsection{Network Architecture}
As shown in Fig. \ref{figure3}, there are two to-be-trained policy neural networks, namely the runner policy $\p^{r*}_{\theta}$ and the shared chaser policy $\p^{c*}_{\theta'}$. In the actor-critic (AC) framework, the policy network is the actor network, which maps observations to actions directly, i.e., $a^i = \mathcal{F}(o^i\mid \theta),i\in\{r,c\}$. The architectures of the two policy neural networks are similar, with three fully-connected (FC) hidden layers activated by the Sigmoid function. Each layer consists of $512$ nodes, $512$ nodes and $4$ nodes, respectively. The input dimension of the actor network is the same as the dimension of the observation, i.e., $3*(N+1)$ for the runner policy and $3*N$ for the chaser policy. As illustrated in Fig. \ref{figure3}, the runner policy and the shared chaser policy are connected with the state $s_i$ of the agent. The relative positions are obtained from the egocentric observations. Note that, in our scenario, the objectives of these two policy networks are opposite, as described in (\ref{mixed_game}).

\subsection{Training Scheme} In our training process, we utilize the PPO algorithm for the policy update as outlined in (\ref{ppo}). PPO is a robust and effective choice for problems with continuous action spaces, as it employs a ``clipped" surrogate objective and on-policy learning. Hence, the algorithm used in this paper can also be denoted as AMS-PPO. {Note that, in each training stage or phase, convergence is monitored using a specific condition: the training is considered converged if there is no significant improvement in the average reward over the last $l_{sw}$ episodes (1000 episodes in our study) compared to the best average reward achieved so far. Upon convergence, the current stage ends, and the training progresses to the next stage. To prevent premature termination due to initial fluctuations, this convergence criterion is applied only after the first 1 million episodes.} 

\begin{table}[!tb]
\begin{center}
\caption{Basic Specifications of Drones in Simulation}
\label{spec}
\centering
% \resizebox{\columnwidth}{!}{
{
\begin{tabular}{@{}l|l@{}}
\toprule
\textbf{Parameter}               & \textbf{Value}     \\ \hline
Number of chasers $N$             &  2              \\
Number of runners & 1 \\
Mass (g) $m$  & 100 \\
Box collider dimension (cm) $L\times W \times H$  & $30\times 30\times 5$  \\
Maximum speed (m/s) $v_{max}$ & 1.0 \\
Maximum angular velocity (rad/s) $\omega_{max}$ & 20.0 \\
\bottomrule
\end{tabular}
}
% }
\end{center}
\end{table}

\section{Experiments and Results}
This section presents the simulation and physical experiments conducted to evaluate the effectiveness of our approach. The simulation experiments cover the training process and inference tests, while the physical experiments focus on the model's ability to transfer from simulation to real-world scenarios and performance in real-time flight tests. Results are analyzed and compared to the traditional Artificial Potential Field (APF) method and other DRL methods. In addition, we investigate the effect of relative maximum speed on the success rate of the navigation task $sr^r$. Furthermore, extensive physical experiments provide a success rate heatmap to examine the impact of the spatial geometry distribution of the runner and chasers on navigation performance.

\subsection{Simulation Experiments}
\subsubsection{Settings}The simulation is run on a workstation. The simulation workstation is used to develop simulation environments for training and testing and export the trained neural network models for deployment. The configuration of the simulation workstation is: AMD Ryzen 9 5900X 12-core CPU, Nvidia RTX 3090 Gaming OC 24GB GPU, and 32GB RAM. Our simulation environment was designed using the Unity rendering engine. The proposed algorithm and training scheme were developed with CSharp and Python based on the ML-Agents Toolkit \cite{juliani2018unity}, which is a flexible training platform for MARL using the Pytorch library. 

The size of the simulation environment in Fig. \ref{figure1} is $5m\times5m\times3m$ (width $\times$ length $\times$ height). One drone is colored white as the runner, while the other two are colored blue as chasers. The target is the red box of dimension $20\times20\times20cm$. The basic specifications of drones are listed in Table \ref{spec}. Note that, in the basic environment, the runner and chasers are the same. The control velocities of drones are limited with $v_{max} = 1m/s$. The goal of the MPETN problem is to avoid physical attacks from the chasers meanwhile to reach the target. The physical attacks mean colliding with the runner of the collider dimension of $30\times30\times5 cm$. The initial linear and angular velocity of drones is reset to zero at the start of each episode. 

The hyperparameters for the AMS-DRL and the PPO training (AMS-PPO) are listed in Table \ref{ams-param}. Note that we set the maximum phases $k_{max}$ as $10$ to avoid inefficient long-time training. The training tools versions are: ml-agents-toolkit: 0.27.0; ml-agents-envs: 0.27.0; communicator API: 1.5.0; PyTorch: 1.8.2+cu11.1. To accelerate the training process, $6$ copies of the environment are placed in one scene, and $3$ concurrent Unity instances are invoked at the start of training. 

\begin{table}[!tb]
\centering
\caption{Parameters of AMS-PPO}
\label{ams-param}
\begin{tabular}{@{}ll|ll@{}}
\toprule
\textbf{Parameter} & \textbf{Value}  &\textbf{Parameter} & \textbf{Value}\\
\hline

Convergence threshold $\eta$ & $10\%$ & Maximum steps $T_{max}$ & 1000 \\
Maximum phases $k_{max}$ & 10 & Test episodes $ws$ & 500 \\
Task reward $C_1$ & 1000 & Task reward $C_2$ & 1000 \\
Existential penalty $w_1$& 10000 & Risk threshold $d_\epsilon$ & $0.5$  \\
\hline

Batch size & 1024 & Buffer size & 10240 \\
Initial learning rate $\alpha_0$ & 0.0003 & beta & 0.01 \\
Clip epsilon $\epsilon$ & 0.2 & lambda $\lambda$ & 0.95 \\
Number of epochs & 3 & Learning rate schedule & linear \\
Checkpoints & 10 & Maximum episodes & 6000000\\
\bottomrule

\end{tabular}
\end{table}

\subsubsection{Baselines}
To evaluate the performance of the proposed approach, we designed several navigation and pursuit strategies (runner/chaser policies) as baselines, including the traditional PID pursuit policy, the APF navigation policy \cite{Falanga2020}, learned navigation policy with PPO \cite{song2021autonomous} {, and the state-of-the-art (SOTA) learning-based pursuit policy CPC-TP in \cite{zhang2022game}.}

(1) Random Pursuit Policy (Random): The chasers move randomly within the enclosed room with a maximum speed of $1m/s$. The initial positions and instant actions of two chasers are randomly generated.

(2) PID Pursuit Policy (PID): The chasers are independently driven by a fast proportional gain controller, aiming to reach the runner's real-time position. The PID controller for the chaser $i$ is formulated as: $a_i^c = [k_p\bm{p}^{cr}_i, 0]$ with $k_p = 2.0$.

{(3) Learned Pursuit Policy (CPC-TP): A pursuit policy trained with CPC-TP in \cite{zhang2022game}, which combines $10$ timestep history observations and an LSTM hidden layer for the policy network and the value network. During the training, the runner moves straight toward a randomly generated target with a maximum velocity $1m/s$. The reward function for the chaser agents is the same as (\ref{terminalRewardC}).}

(4) Manual Navigation Policy (Manual): A manually controlled navigation policy using the keyboard based on the eyes' observation. This is to simulate the manual control mode of drones. Before we evaluate this policy, 50 trials have been practiced to master the keyboard operations. The SR is calculated over 100 episodes.

(5) Learned Navigation Policy (PPO): A one-stage leaning-based navigation policy trained with PPO against the PID pursuit policy. During training, the runner learned to avoid chasers driven by the PID pursuit policy and try to reach the randomly generated target. The training parameters are the same as in Table \ref{ams-param}.

{(6) Learned Navigation Policy (CPC-TP): A navigation policy trained with CPC-TP in \cite{zhang2022game}, against the PID pursuit policy. The policy network and value network are the same as the CPC-TP pursuit policy in (3).}

\begin{figure}[tbp]
      \centering
%       \framebox{\parbox{3in}{
% }}
      \includegraphics[width=3.4in]{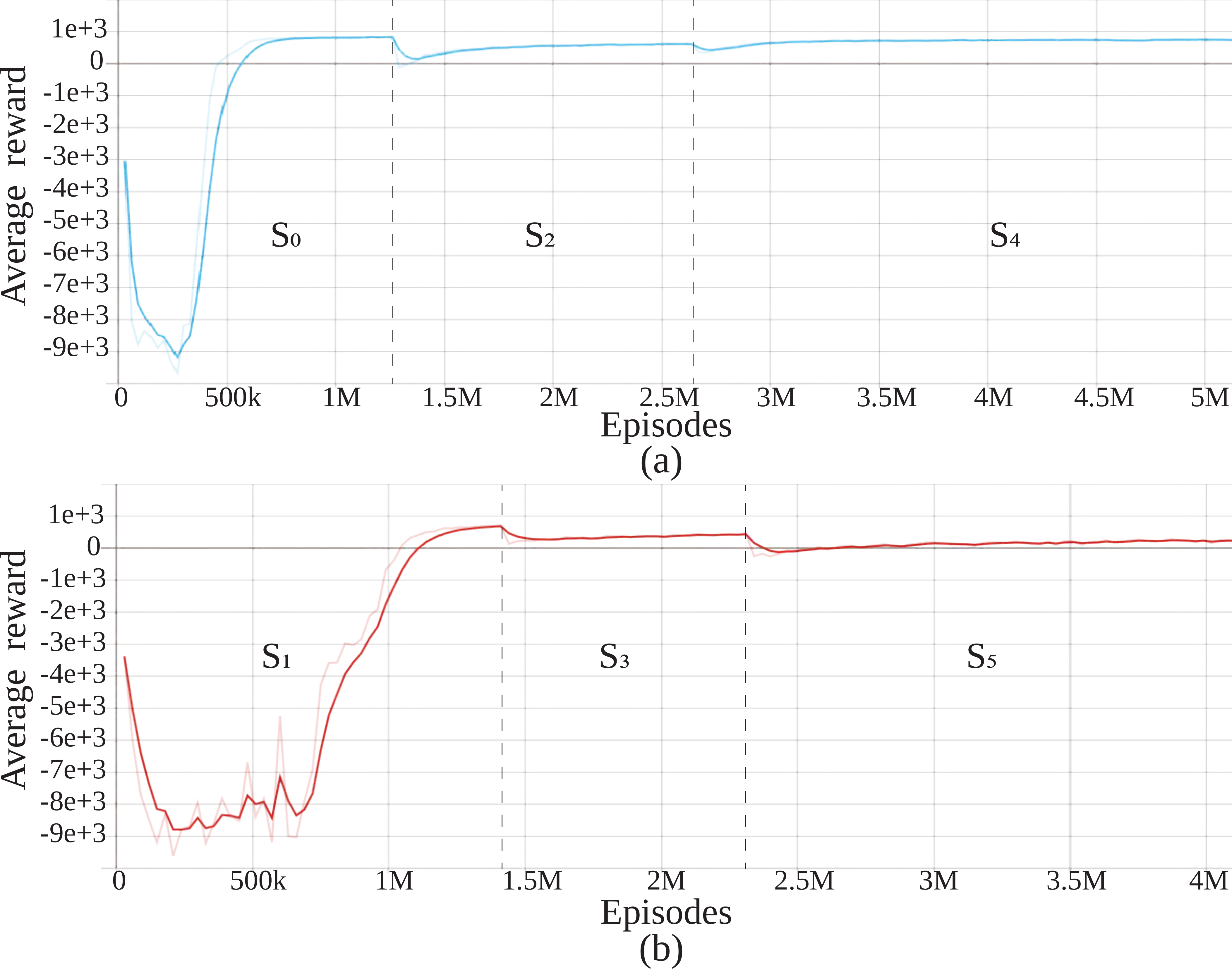}
      \caption{Average reward of AMS-DRL during the training. The training phases are denoted by $S_0$-$S_1$. (a) Runner policy; (b) Chaser policy.}
      \label{reward-train}
\end{figure}

\begin{figure}[tbp]
      \centering
%       \framebox{\parbox{3in}{
% }}
      \includegraphics[width=3.3in]{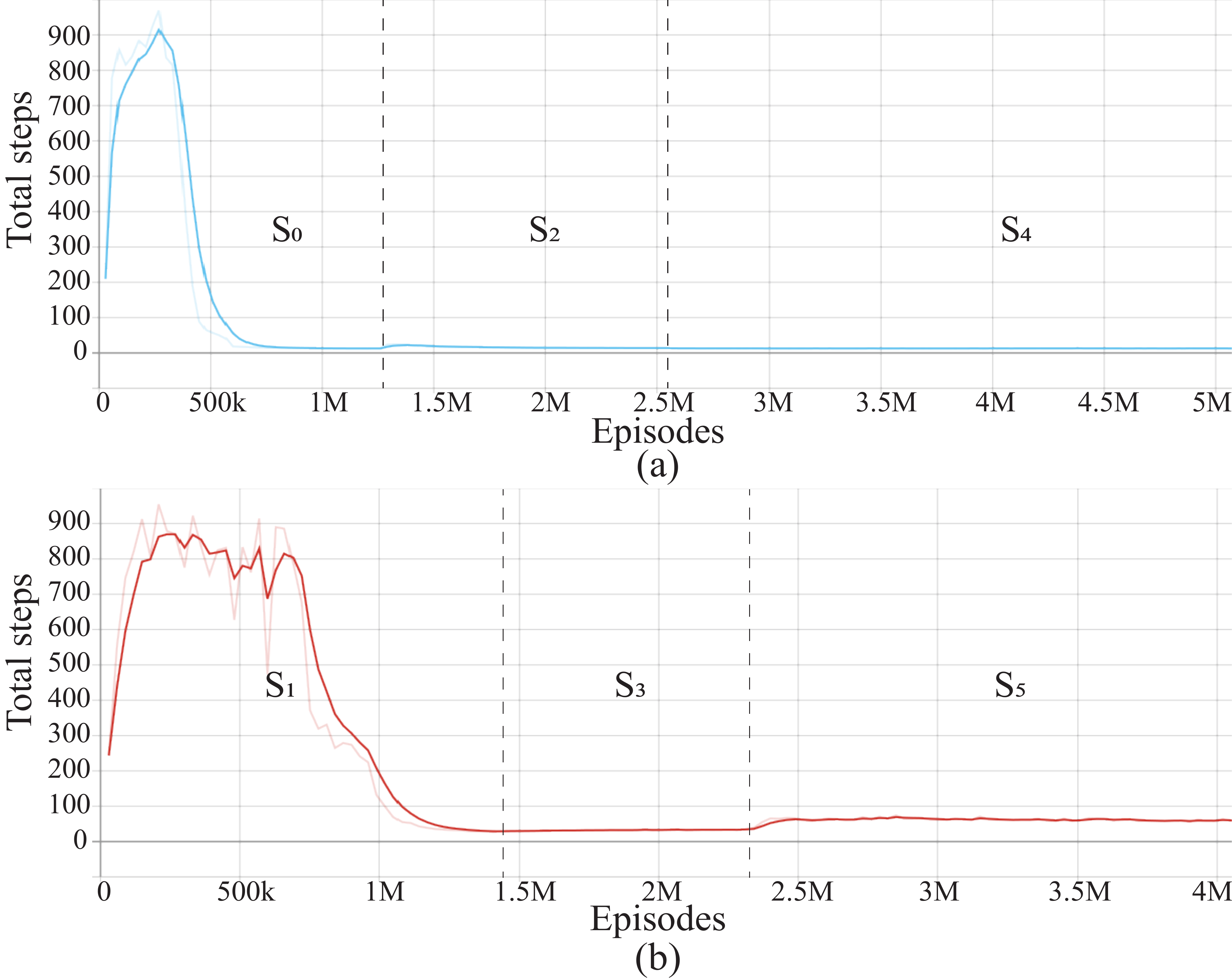}
      \caption{Total steps of AMS-DRL during the training. The training phases are denoted by $S_0$-$S_1$. (a) Runner policy; (b) Chaser policy.}
      \label{steps-train}
\end{figure}

(7) APF Navigation Policy (APF): A traditional heuristic navigation policy with an attractive potential field and a repulsive potential field. In our work, the field functions in \cite{choset2010robotic} are adopted. The attractive potential field is described as:

\begin{equation}
{U_{att}(\bm{p}^r)} = \left\{ \begin{array}{l}
{0.5\xi d^2(\bm{p}^r, \bm{p}^g)},\quad d(\bm{p}^r, \bm{p}^g) \leq d^*_{g}; \\
d^*_{g} \xi d(\bm{p}^r, \bm{p}^g) - 0.5 \xi (d^*_{g})^2, \quad d(\bm{p}^r, \bm{p}^g) > d^*_{g}.
\end{array} \right. 
\end{equation}
where $\xi$ is the attractive force gain, $d(\bm{p}^r, \bm{p}^g)$ is the Euclid distance between $\bm{p}^r$ and $\bm{p}^g$. $d^*_{g}$ is the adjustable distance. The repulsive potential {field} is:
\begin{equation}
{U_{rep}(\bm{p}^r)} = \left\{ \begin{array}{l}
{0.5\zeta (1/D(\bm{p}^r) - 1/D^*)^2},\quad D(\bm{p}^r) \leq D^*; \\
0, \quad D(\bm{p}^r) > D^*.
\end{array} \right. 
\end{equation}
where $\zeta$ is the repulsive force gain and $D(\bm{p}^r)$ is the nearest distance between the runner and the chasers. $D^*$ is the adjustable safe distance. The desired combined force generated from the potential fields {is}:

\begin{equation}
    F(\bm{p}^r) = -\nabla U(\bm{p}^r)
\end{equation}
where $U(\bm{p}^r) = U_{att}(\bm{p}^r)+ U_{rep}(\bm{p}^r)$. $\nabla(\cdot)$ is the gradient vector w.r.t. the position of runner $\bm{p}^r$. Then we can obtain:
\begin{equation}
     \nabla {U_{att}(\bm{p}^r)} = \left\{ \begin{array}{l}
{\xi (\bm{p}^r- \bm{p}^g)},\quad d(\bm{p}^r, \bm{p}^g) \leq d^*_{g}; \\
d^*_{g} \xi (\bm{p}^r - \bm{p}^g)/d(\bm{p}^r, \bm{p}^g), \quad d(\bm{p}^r, \bm{p}^g) > d^*_{g}.
\end{array} \right. 
\end{equation}
\begin{equation}
\nabla {U_{rep}(\bm{p}^r)} = \left\{ \begin{array}{l}
0, \quad D(\bm{p}^r) > D^* \\
{\zeta (1/D^* - 1/D(\bm{p}^r))}\frac{1}{D^2(\bm{p}^r)}\nabla D(\bm{p}^r), others.
\end{array} \right. 
\end{equation}
Here, we set $\xi =2.0$, $\zeta=2.0$, $d^*_g=0.5m$ and $D^*=0.5m$.

\subsubsection{Ablations}
To investigate the effect of asynchronous multi-stage learning, we compared the AMS-DRL with the ``Direct" learning method. The ``Direct" learning method trains two different policies, i.e., runner policy and chaser policy, at the same time with the same configurations of AMS-DRL.

\begin{table}[!tbp]
\begin{center}
\caption{Navigation Performance Comparison (success rate of the runner $sr^r$ \%) against Various Chaser Policies}
\label{tests}
\centering
\resizebox{\columnwidth}{!}{\begin{tabular}{@{}l|cccccc@{}}
\toprule
Chaser Policy              &  \multicolumn{6}{c}{Runner Policy (Navigation Policy)} \\ (Pursuit Policy)& Manual & APF & Direct & PPO  & CPC-TP  & AMS-DRL       \\ \hline
Random & 77.6 & \textbf{99.4} & 21.6 & 94.4& 80.0 & 95.1 \\
PID   &  10.2          &    \textbf{67.1}    &  0.0&    30.3 & 29.5 & 44.1      \\
{CPC-TP} &2.4 & \textbf{77.9}&0.0 & 11.2 & 9.4  & 44.8\\
AMS-DRL      & {\color{red}0.0} & {\color{red}31.8} & {\color{red}0.0} & {\color{red}3.2} & {\color{red}4.6} & {\color{red}\textbf{37.1}} \\
\bottomrule

\end{tabular}
}
\end{center}
\end{table}

\begin{table}[!tb]
\centering
\caption{Comparison of inference runtime in various devices (ms).}
\label{tab:runtime_comparison}
{\begin{tabular}{lccc}
\toprule
Method/Device & Desktop & Xavier NX & RaspberryPi 4B \\
\midrule
CPC-TP\cite{zhang2022game} & 0.152$\pm$0.038 &      0.889$\pm$0.118 &     4.748$\pm$0.892 \\
AMS-DRL(Chaser) & 0.084$\pm$0.011 &      0.580$\pm$0.135 &     2.296$\pm$0.866 \\
AMS-DRL(Runner) & 0.085$\pm$0.006 &      0.536$\pm$0.172 &     1.592$\pm$0.562 \\
\bottomrule
\end{tabular}}
\end{table}
 
\subsubsection{Training Results}
During the training, all objects are randomly generated within the constrained 3D simulation space. The mean rewards and total steps over three epochs of the AMS-DRL training are illustrated in Fig. \ref{reward-train} and Fig. \ref{steps-train}. {We can find that the policies converge at phase $S_5$ with a threshold $\eta = 10\%$. Both runner and chaser policies fluctuate in the early training phases, i.e., $S_0$ and $S_1$ for runner and chaser policies, respectively. In each phase, the trained policy improves until the aforementioned convergence condition is met.} The success rates $sr^r$ and $sr^c$ of $S_5$ are $45.31\%$ and $54.69\%$ respectively. From Fig. \ref{reward-train}, we can observe that at each phase there is a reward drop compared to the last inference policy at the beginning of training. This indicates that their opposites learned to become better from the previous phase.

\begin{figure}[!tbp]
      \centering
%       \framebox{\parbox{3in}{
% }}
      \includegraphics[width=3.4in, trim={0 0.15in 0 0},clip]{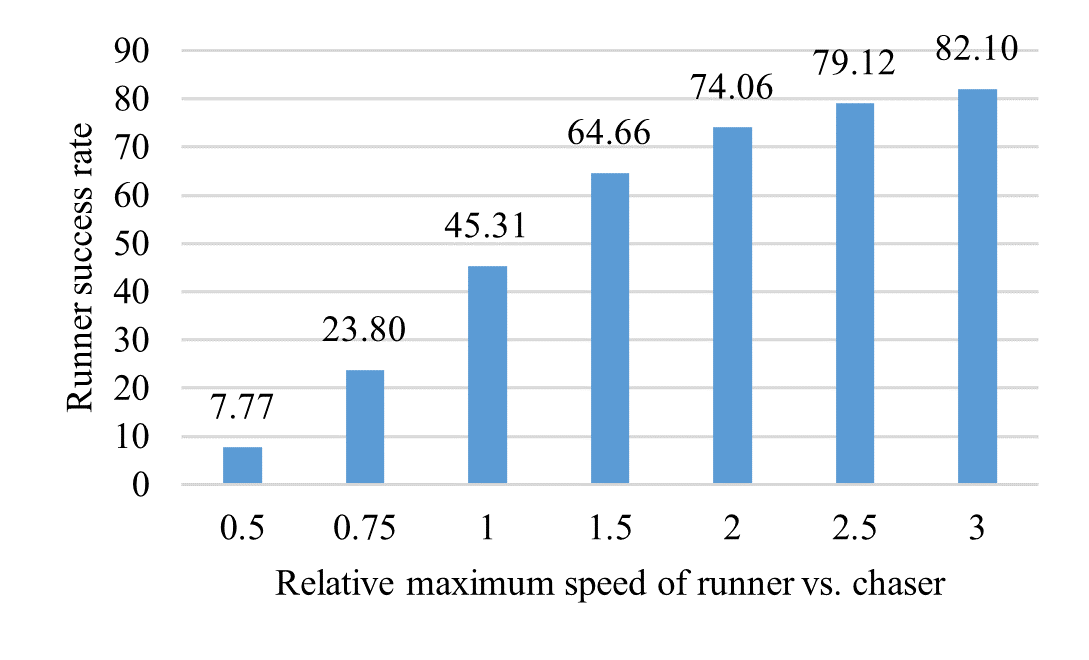}
      \caption{Runner success rate with different relative maximum speed.}
      \label{relative}
\end{figure}

\begin{figure}[!tbp]
      \centering
%       \framebox{\parbox{3in}{
% }}
     { \includegraphics[width=3.4in, trim={0 0.2in 0 0},clip]{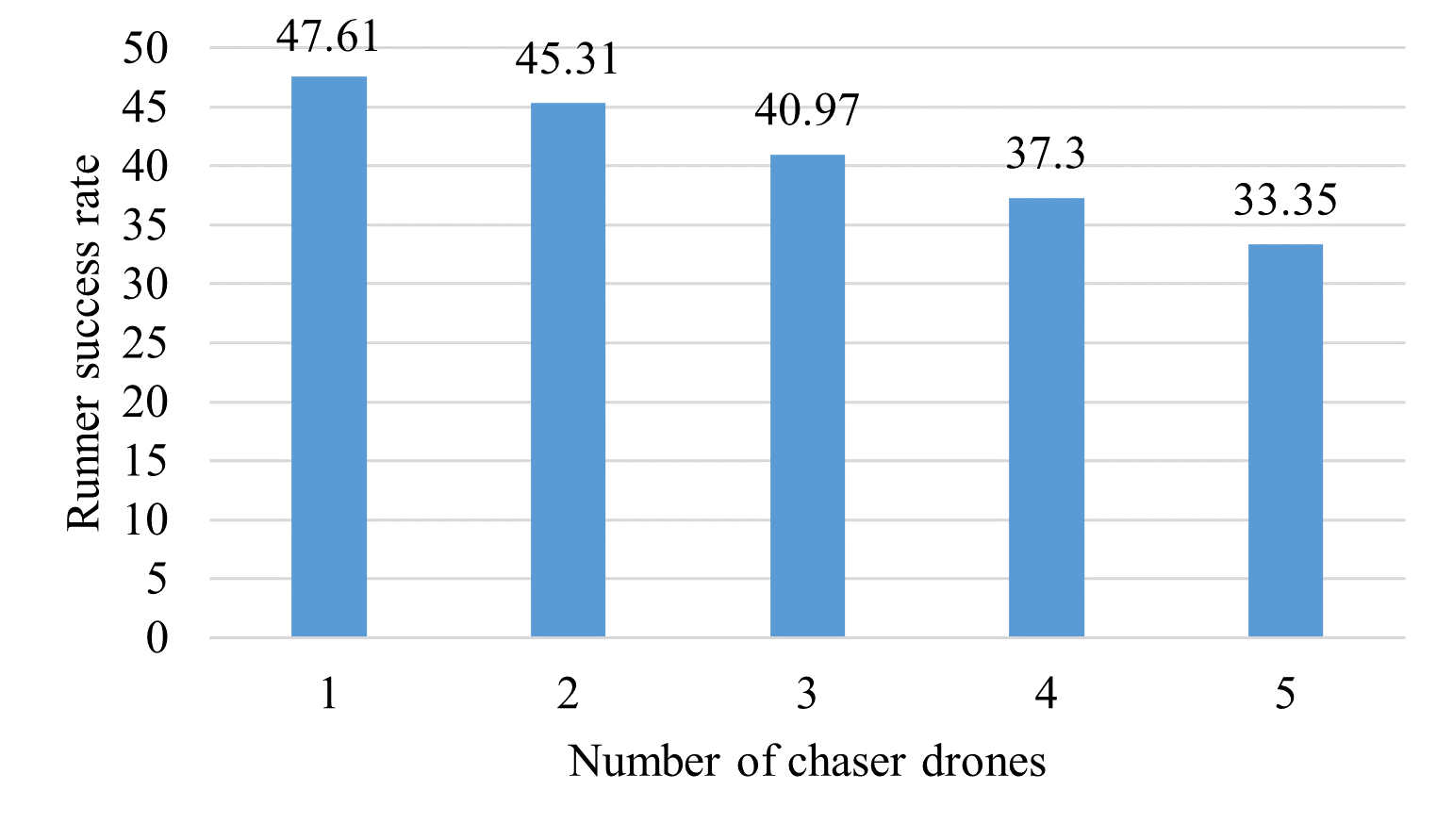}
      \caption{Runner success rate with different numbers of chaser drones.}
      \label{fig:numberChaser}}
\end{figure}

\subsubsection{Testing Results}
In this work, the navigation success rate $sr^r$ (reach the target without collisions) is the evaluation metric for the proposed approach. We tested different combinations of pursuit policies and navigation policies over 500 episodes. The AMS-DRL policies are exported from the last phase $S_5$. The Direct policy is trained from the ablation ``Direct" method. Note that for fair evaluation, during testing, we fixed the target at position $[1.0, 3.5, 0.5]m$ and the runner at position $[1.0, 0.0, 0.2]m$ with random noise inside a sphere of radius $0.2m$. This setting will decrease the navigation performance compared to the training process. The success rates $sr^r$ are listed in Table \ref{tests}. {The inference runtime of different policies is tested on various devices, namely the desktop with Intel i7-10700, Jetson Xavier NX, and Raspberry Pi 4B. From the test results listed in Table \ref{tab:runtime_comparison}, the trained policies can be executed onboard in real time.}

{Compared to other learning-based baselines such as Direct, PPO and CPC-TP}, our proposed learning-based approach AMS-DRL has achieved the best performance against different chaser policies. The direct method cannot learn an effective navigation policy since the runner is unable to focus on reaching the target while avoiding the chasers at the same time. APF obtains the highest success rate against random, PID and CPC-TP pursuit policies while falling behind against the AMS-DRL pursuit policy. {A well-designed APF has been validated with excellent dynamic obstacle avoidance performance in \cite{Falanga2020} when the environment is not cluttered, but our learning-based approach is better at evading collaborative learning-controlled pursuers and with fewer navigation steps (average 182 steps compared to 237 steps for AFP).} Meanwhile, our learning-based approach is more promising when a full nonlinear quadrotor model and individual thrust control are adopted, while APF can only handle the point-mass model. {From column-wise, the chaser policy trained with AMS-DRL outperforms all other chaser policies}, as the success rate decreases when the AMS-DRL chaser policy is adopted. This provides insights into designing pursuit strategies.

\begin{figure}[!tbp]
      \centering
%       \framebox{\parbox{3in}{
% }}
      \includegraphics[width=2.9in]{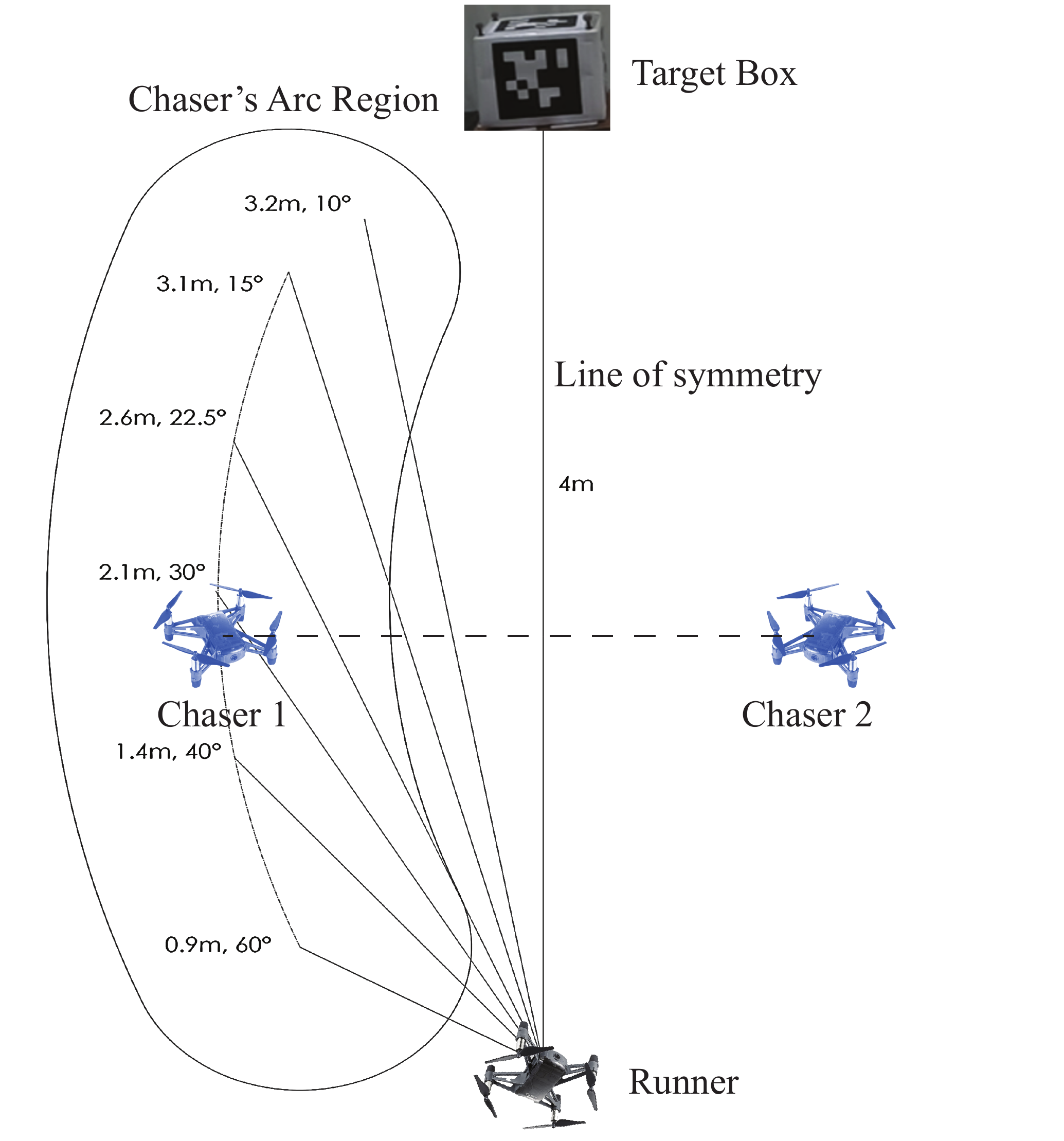}
      \caption{The spacial geometry distribution of agents in the physical tests.}
      \label{distri}
\end{figure}

In the training scenario, the same maximum speed of chasers and runner is assumed, which limits the runner's success rate. To showcase how the relative maximum speed affects $sr^r$, we conducted 1000 simulation tests, each with various relative maximum speeds of runner-chaser from 0.5 to 3. { Note that relative maximum speed $\tilde{v}_{max}$ is defined as the ratio of the maximum speed of the runner to that of the chasers, i.e., $\tilde{v}_{max}=v^r_{max}/v^c_{max}$, thereby providing a comparative measure of their speed capabilities. In our experiments, we fixed the maximum speed of the chasers at $1m/s$ while changing the maximum speed of the runner drone with different relative maximum speeds.} The results in Fig. \ref{relative} show that a faster runner is able to outmaneuver multiple collaborative chasers, especially in cases where the chasers are located between the runner and the target box. {To test our approach in more adversarial scenarios, we trained the policies with different numbers of chaser drones. The results in Fig. \ref{fig:numberChaser} show that more chaser drones could bring down the success rate slightly but the runner policy still performs better than other methods in Table \ref{tests}.}  

\begin{figure*}[!tbp]
\centering
\subfigure[t=0.0s]{\includegraphics[width=2.24in]{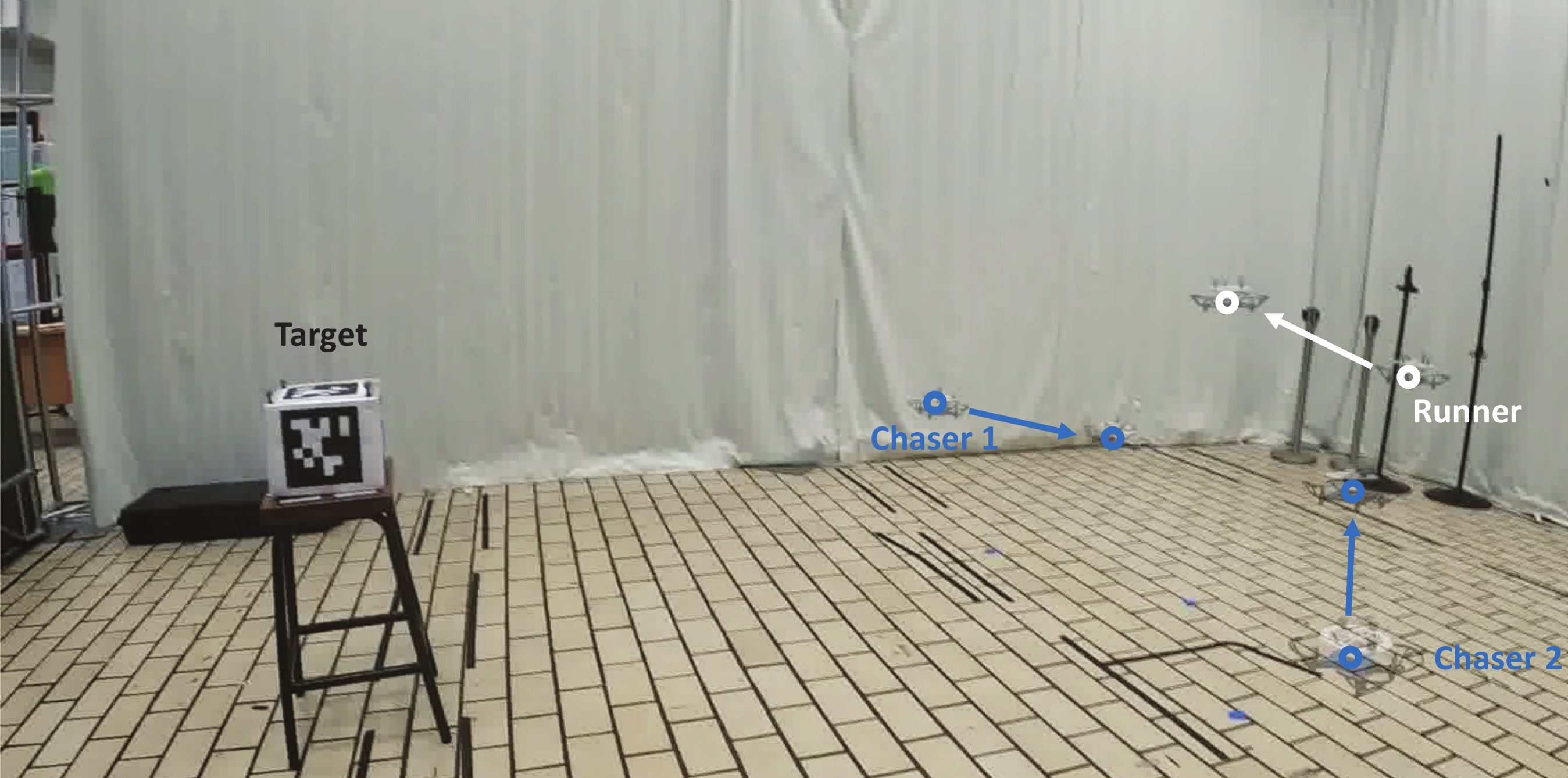}%
\label{fig_first_case}}
\subfigure[t=1.0s]{\includegraphics[width=2.24in]{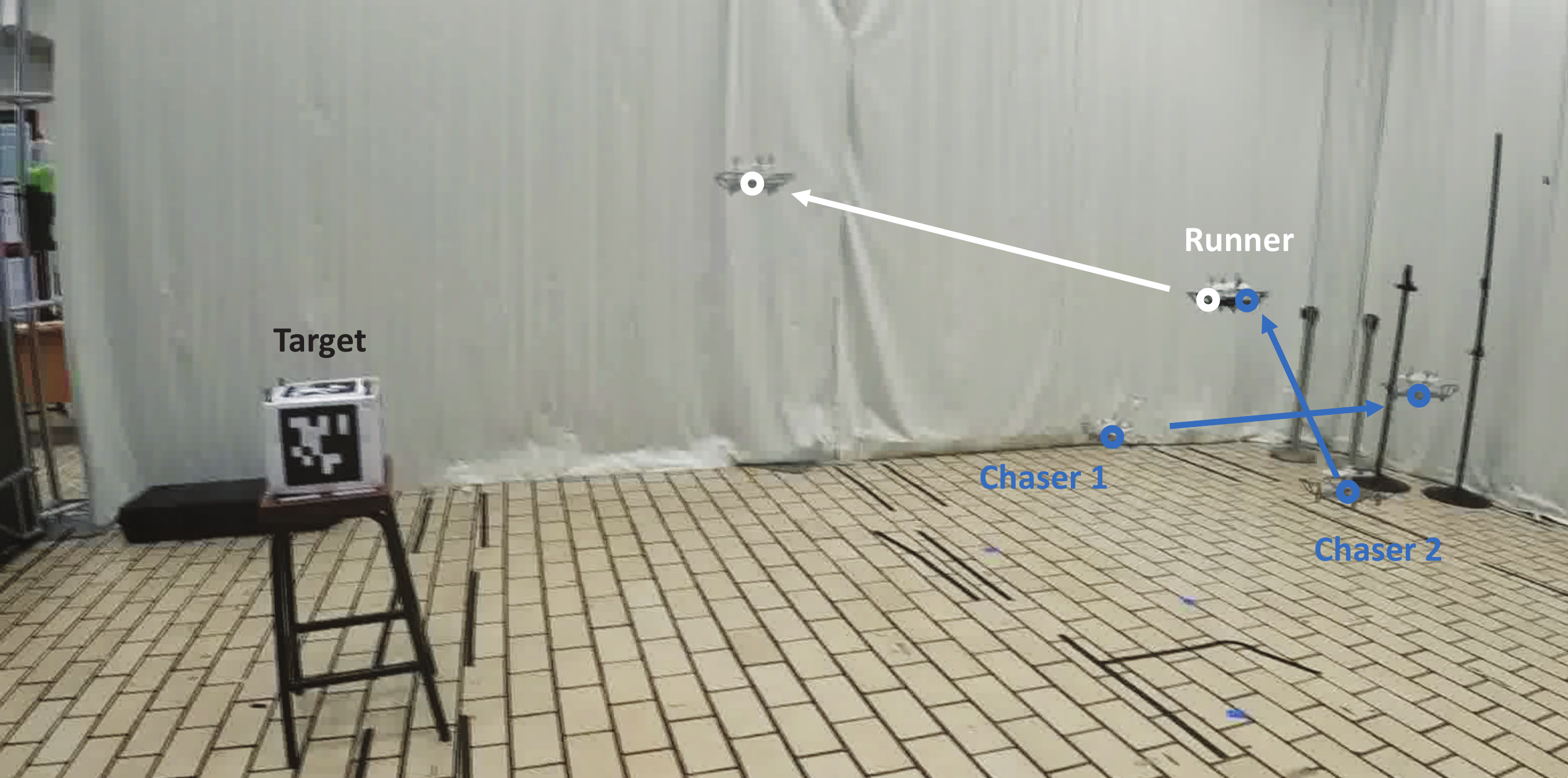}%
\label{fig_second_case}}
\subfigure[t=2.0s]{\includegraphics[width=2.24in]{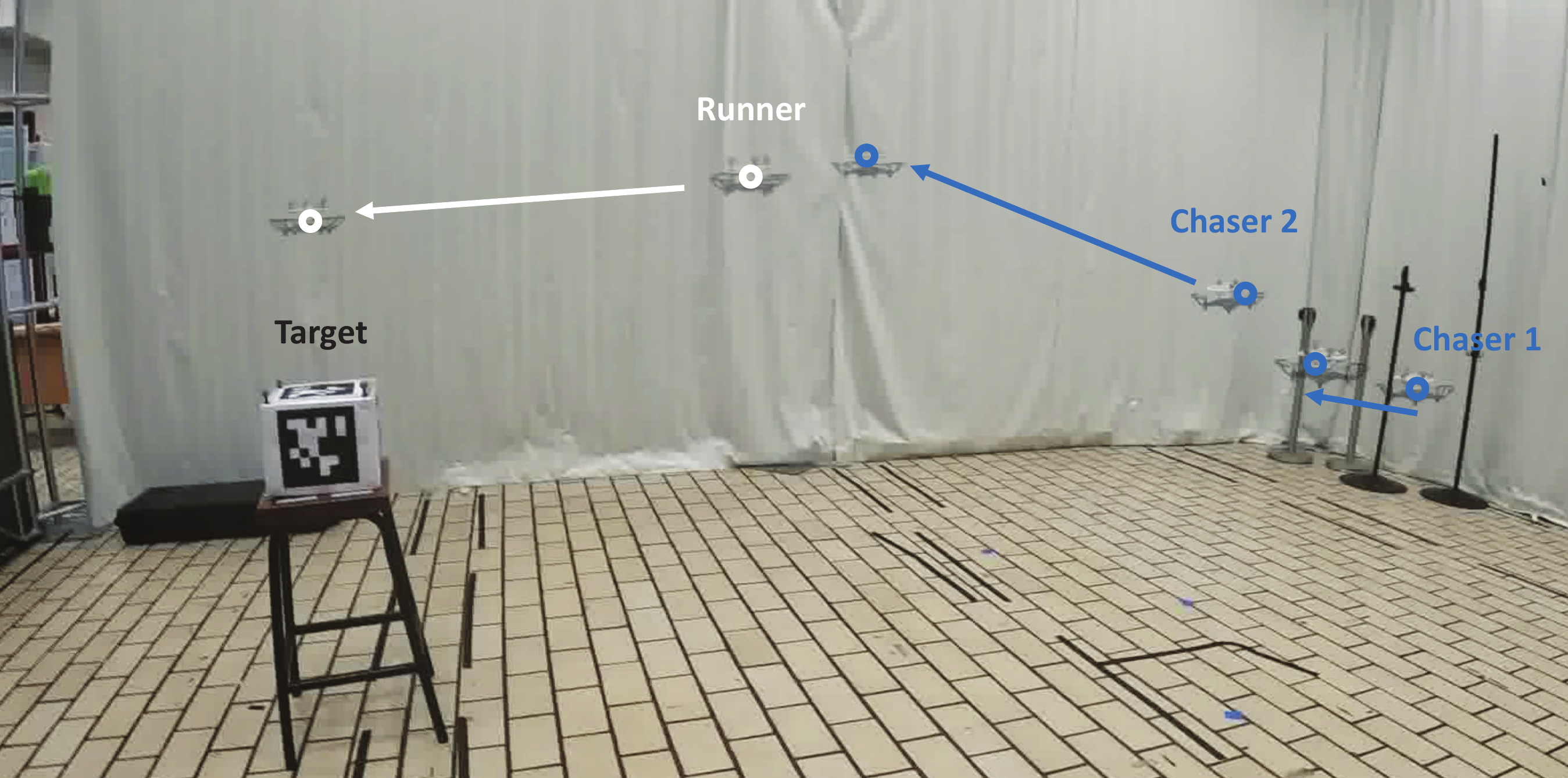}%
\label{fig_third_case}}

\subfigure[t=3.0s]{\includegraphics[width=2.24in]{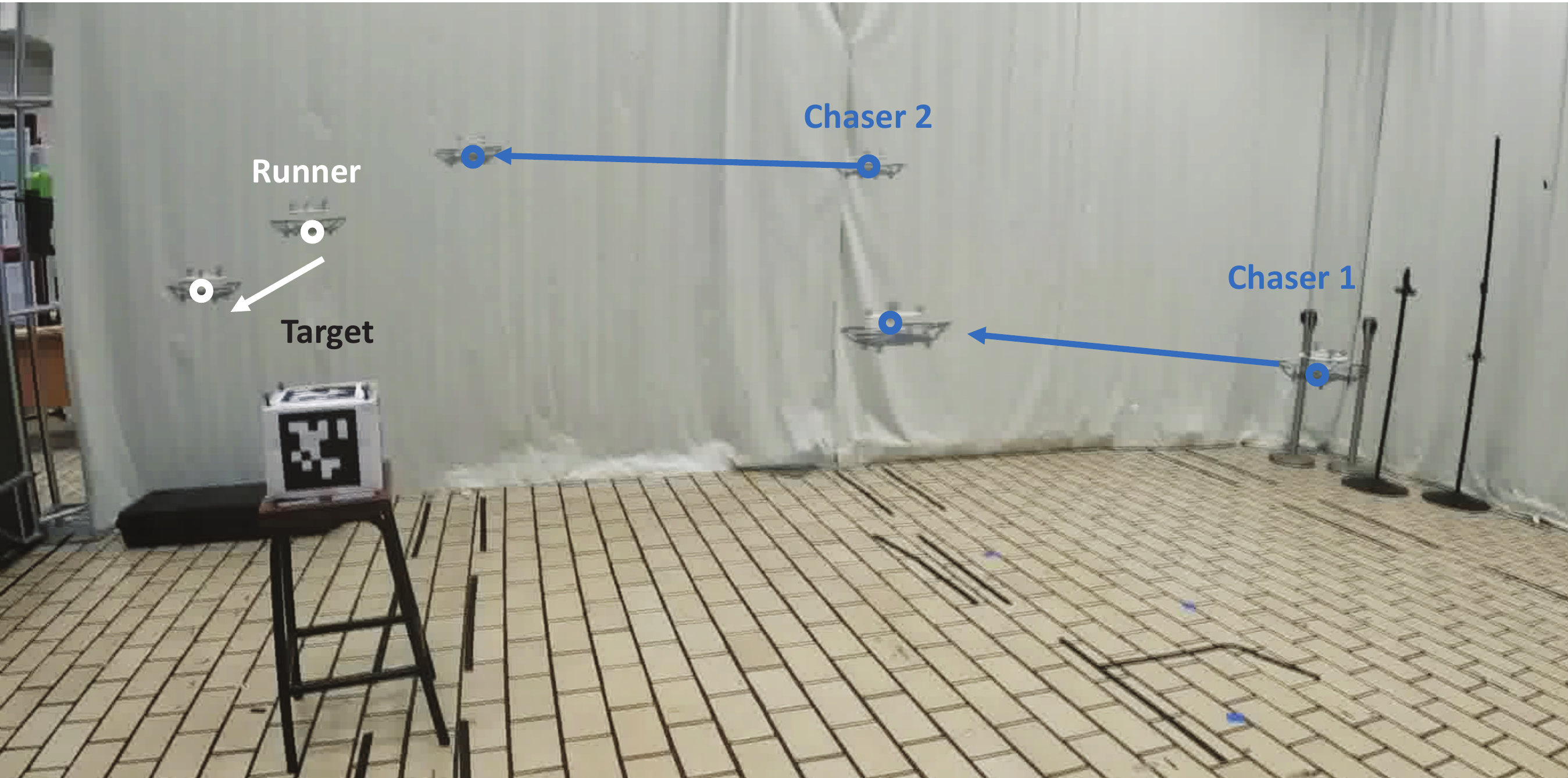}%
\label{fig_four_case}}
\subfigure[t=4.0s]{\includegraphics[width=2.24in]{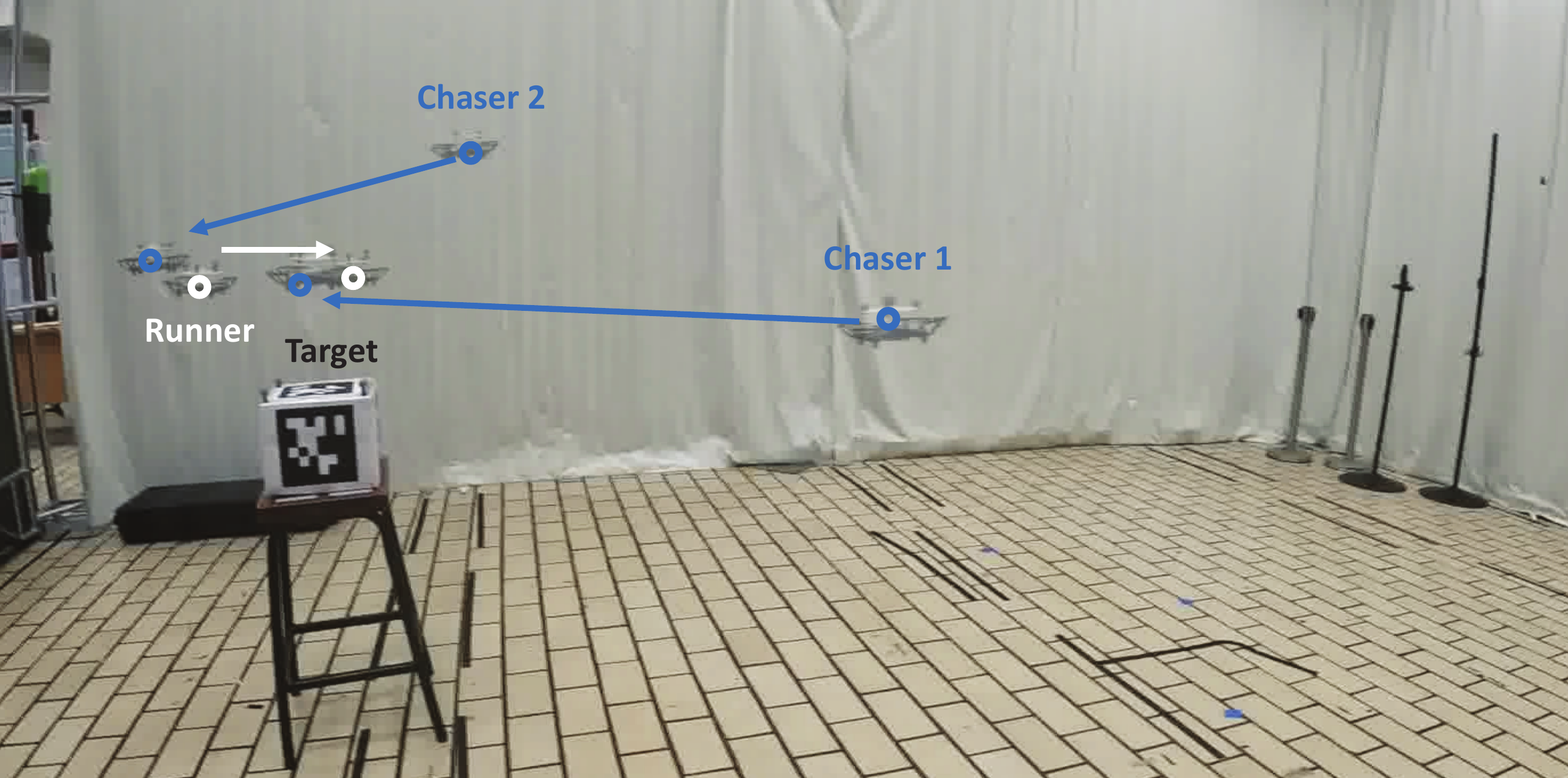}%
\label{fig_five_case}}
\subfigure[t=5.0s]{\includegraphics[width=2.24in]{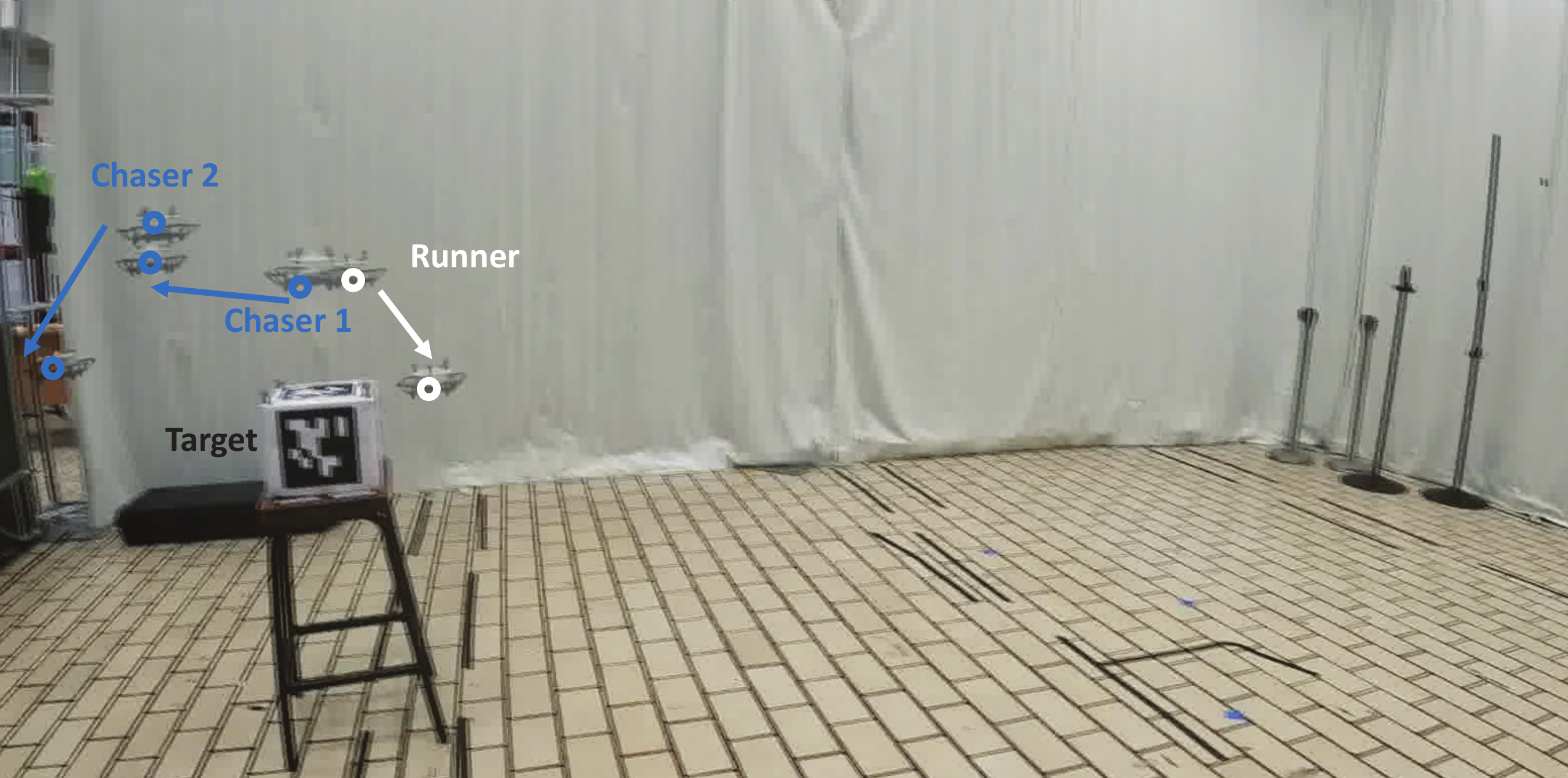}%
\label{fig_last_case}}
\caption{The snapshots of multi-pursuit evasion with targeted navigation in physical experiments. The white runner drone eventually reached the target box without being captured by intelligent blue chasers. The beginning of the arrow is the current position, while the end of arrow is the position in next frame.}
\label{snapshot}
\end{figure*}

\begin{figure}[!tbp]
      \centering
%       \framebox{\parbox{3in}{
% }}
      \includegraphics[width=3.3in]{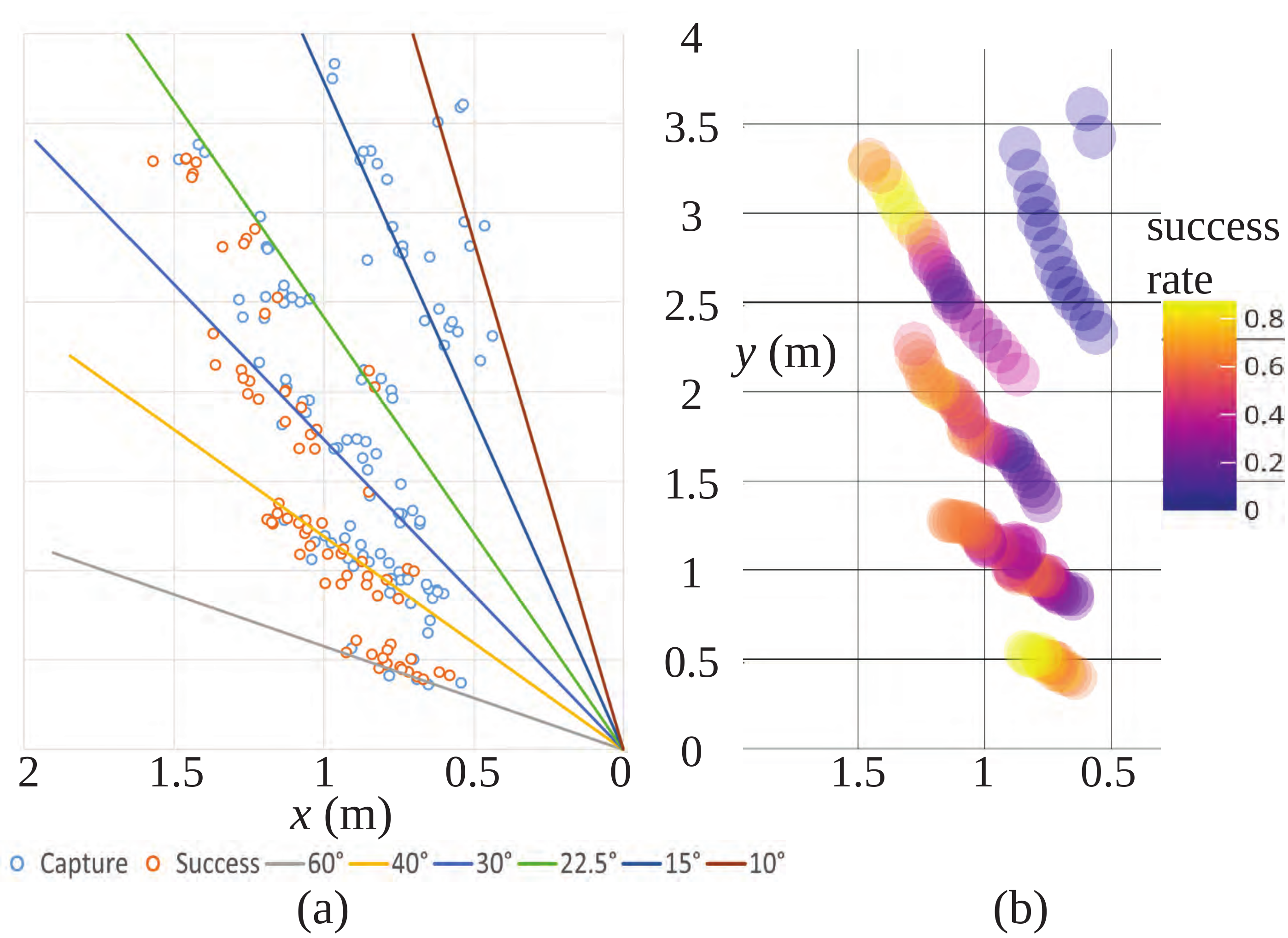}
      \caption{The testing result distribution with different spacial geometry configurations. (a) The cases scatter; (b) the success rate heatmap. The runner is fixed at the origin $[0.0,0.0]$ while the chasers are  mirrored along the y-axis.}
      \label{distri2}
\end{figure}

\subsection{Physical Experiments}
In this section, we present the physical experiments conducted and evaluate the navigation performance of the trained runner policy in real-time flight. To evaluate the models' generalization and Sim2Real capability, we transfer our trained models with the proposed AMS-DRL approach to an unseen large indoor environment to control three Tello Edu drones \footnote{https://www.ryzerobotics.com/tello-edu/downloads} without fine-tuning. Note that both the chaser policy and the runner policy are exported from the final phase of AMS-DRL. More details can be found at \url{https://github.com/NTU-ICG/AMS-DRL-for-Pursuit-Evasion}.
\subsubsection{Settings}
The physical test environment is an indoor flight enclosure with a size of $8 \times 7 \times 4m$. The target is a parcel box with a size of $20 \times 30 \times 20cm$. The runner and two chasers are the same Tello Edu drones of size $20 \times 19 \times 5 cm$, tagged with white and blue colors, respectively. {The detailed specifications of Tello Edu refer to \url{https://www.ryzerobotics.com/tello-edu/specs}.} The real-time position of each drone and the target box was streamed by the OptiTrack motion capture system with tracker balls attached to each object of interest (OOI). Fig. \ref{example} shows one instance of our experiment, where the runner, chasers, and box are OOIs in our experiments.

All drones are driven by ROS Melodic operating on a center computer (Ubuntu 18.04 OS) with a frequency of $240Hz$. The center computer subscribes to the positions of all OOIs and computes the desired command velocities with loaded trained policy models (exported as ONNX files). Each drone is set as AP mode and connected to the existing WiFi network with different IP addresses (192.168.120-192.168.122) during real-time flight tests. The command velocities are sent through the Tello ROS package\footnote{https://github.com/xjp99v5/tello-driver-ros} via a connected WiFi network. During testing, Python3 and ONNXRuntime\footnote{https://onnxruntime.ai/} are used for inference with trained models in ONNX format.

To minimize damage and prevent actual collisions between the chaser and runner drones, a minimum distance of $35cm$ must be maintained between the drones; failure to do so will indicate a corresponding collision outcome. Similarly, a $20cm$ threshold was implemented to detect the arrival of the runner drone at the target box. To fairly evaluate the navigation performance of the runner drone in different situations, the target is fixed in a test position $[-2.0, 0.8, 1.0]m$. The runner drone is placed at a position $[0.45, 1.3, 1.0]m$ with random noise, while the chaser drones’ locations are mirrored along the line of symmetry (LOS) formed between the target box and the runner, as illustrated in Fig. \ref{distri}.

\subsubsection{Sim2Real Performance Evaluation}
In this work, to investigate the navigation performance of the runner policy in various situations, we conducted 189 tests with different spacial geometry configurations. {As illustrated in Fig. \ref{distri}, the chaser's takeoff region (approximated by an arc region) is segregated along 4 lines at 60°, 40°, 30°, and 15° with respect to the LOS, starting at the runner position.} Along each line, several starting positions were denoted with equal spacing between them, and an average of 5-8 flights were conducted at each starting position. Additional lines at 22.5° and 10° were included towards the end of this data collection process for a more comprehensive evaluation. All flight test results are collected and plotted in Fig. \ref{distri2} (a), where the success and capture cases are shown on one side of the corresponding chaser starting region. Here, a success case means the runner drone managed to evade the chasers and reach the target box while a capture case indicates the chaser drones managed to intercept and collide with the runner drone. A successful case is demonstrated in Fig. \ref{snapshot}, which shows the maneuvers performed by the runner and the chasers. From Fig. \ref{snapshot}, the runner drone finally arrived at the target box and meanwhile succeeded in evading the collision from chaser drones. Note that both the drone's body momentum and the air downwash influence its motion, as illustrated in Figs. \ref{snapshot} (b) and (f).

\subsubsection{Results and Discussion}

\begin{table}[!tbp]
\begin{center}
\caption{Navigation Performance of Runner in Real-Time Flight}
\label{success_rate}
\begin{tabular}{@{}lcccccc@{}}
\toprule
Angle      & 60°& 40°& 30°& 22.5° & 15°& 10°                         \\ \midrule
Success       & 17   & 27 & 17 &13  & 0  & 0      \\ 
Capture     & 6   &  31  &   24 & 24 & 20 & 10                                                                                     \\
Total & 23 & 58 & 41 & 37 & 20 & 10 \\
Success rate   & 73.9\%  & 46.6\% & 41.5\% & 35.1\% & 0\% &0\%                   \\
\bottomrule
\end{tabular}
\end{center}
\end{table}

To further analyze the navigation performance of the runner policy, the success rate heatmap is generated over the averaged coordinates of the first 8 closest data points along each angle edge in terms of absolute distance to the chaser drone. This process was repeated iteratively for the next 8 data points until all points along the angle edge were included. The runner drone’s success rate is around $50\%$ along the 40° line, and the success rate decreases when the angle decreases. The success rates with different angles are listed in Table \ref{success_rate}. For angles beyond 90°, the runner drone has a nearly $100\%$ success rate as the chaser drones are behind the runner drone when all of them have the same maximum speed. Note that this simplified physical setup assumes that both chaser drones will spawn mirrored along the LOS, while in a truly random case, the probability is higher since one chaser drone can spawn closer to the runner drone, lowering the runner drone’s success rate.

Among these 189 physical tests, 115 tests were labeled as capture ($60.8\%$) while another 74 tests were labeled as success ($39.2\%$), which benchmarked against the simulation results as listed in Table \ref{tests} ($37.1\%$). The results verified the generalization capabilities of trained policies in a physical application. The differences could be chalked up to the low-level control limitations and the experiment setup. The lack of thrust makes drones unable to maintain the desired flight path under the influence of another drone’s downwash as seen in Fig. \ref{fig_last_case}. An improved navigation policy is expected to be learned with a fully nonlinear model and individual thrust control. The scalability of the proposed approach could be enhanced with the help of an attention mechanism.

\section{Conclusion}

In this paper, we present a novel approach, AMS-DRL, for ensuring the safe navigation of drones under physical attacks from multiple pursuers. The AMS-DRL method trains a deep adversarial neural network to learn from the actions of the pursuers and respond effectively to avoid collisions and reach the target destination. Through extensive simulations and physical experiments, our results show that this approach outperforms existing learning-based baselines in terms of navigation success rate. Additionally, the success rate heatmap provides valuable insights into the impact of spatial geometry on drone navigation performance.

This research highlights the benefits of DRL in solving complex and challenging problems in drone navigation. In addition, the performance of the learning-based chaser policy provides benchmarks for research on anti-drone techniques with DRL. {Further work will leverage individual thrust control and visual sensors to improve the agility and applicability of the runner drone in more adversarial environments.}

% if have a single appendix:
%\appendix[Proof of the Zonklar Equations]
% or
%\appendix  % for no appendix heading
% do not use \section anymore after \appendix, only \section*
% is possibly needed

% use appendices with more than one appendix
% then use \section to start each appendix
% you must declare a \section before using any
% \subsection or using \label (\appendices by itself
% starts a section numbered zero.)
%

% \appendices
% \section{Proof of the First Zonklar Equation}
% Appendix one text goes here.

% you can choose not to have a title for an appendix
% if you want by leaving the argument blank
% \section{}
% Appendix two text goes here.

% use section* for acknowledgment
\section*{Acknowledgment}
The authors would like to thank Thio Teng Kiat with Nanyang Technological University, Singapore, for the help and valuable feedback on this work.

% Can use something like this to put references on a page
% by themselves when using endfloat and the captionsoff option.
\ifCLASSOPTIONcaptionsoff
  \newpage
\fi

% trigger a \newpage just before the given reference
% number - used to balance the columns on the last page
% adjust value as needed - may need to be readjusted if
% the document is modified later
%\IEEEtriggeratref{8}
% The "triggered" command can be changed if desired:
%\IEEEtriggercmd{\enlargethispage{-5in}}

% references section

% can use a bibliography generated by BibTeX as a .bbl file
% BibTeX documentation can be easily obtained at:
% http://mirror.ctan.org/biblio/bibtex/contrib/doc/
% The IEEEtran BibTeX style support page is at:
% http://www.michaelshell.org/tex/ieeetran/bibtex/
%\bibliographystyle{IEEEtran}
% argument is your BibTeX string definitions and bibliography database(s)
%\bibliography{IEEEabrv,../bib/paper}
%
\bibliographystyle{IEEEtran}
\bibliography{IEEEabrv,ref}
% <OR> manually copy in the resultant .bbl file
% set second argument of \begin to the number of references
% (used to reserve space for the reference number labels box)

% \begin{IEEEbiography}{Michael Shell}
% Biography text here.
% \end{IEEEbiography}

% % if you will not have a photo at all:
% \begin{IEEEbiographynophoto}{John Doe}
% Biography text here.
% \end{IEEEbiographynophoto}

% % insert where needed to balance the two columns on the last page with
% % biographies
% %\newpage

% \begin{IEEEbiographynophoto}{Jane Doe}
% Biography text here.
% \end{IEEEbiographynophoto}

% You can push biographies down or up by placing
% a \vfill before or after them. The appropriate
% use of \vfill depends on what kind of text is
% on the last page and whether or not the columns
% are being equalized.

%\vfill

% Can be used to pull up biographies so that the bottom of the last one
% is flush with the other column.
%\enlargethispage{-5in}

% that's all folks
\end{document}